\newtheorem{lemma}{Lemma}
\definecolor{goodblue}{HTML}{4477AA}
\definecolor{badorange}{HTML}{EE7733}
\definecolor{darkgreen}{RGB}{0, 100, 0}
\newcommand{\englanfz}[1]{``#1''}
\title{Quantifying Ambiguity in Categorical Annotations: A Measure and Statistical Inference Framework}
\author{Christopher Klugmann\thanks{Corresponding author} ~and Daniel Kondermann \\
	Quality-Match GmbH\\
	Heidelberg, Germany\\
	\texttt{\{ck,dk\}@quality-match.com} \\
}
\begin{document}

\maketitle

\begin{abstract}
	Human-generated categorical annotations frequently produce empirical response distributions (soft labels) that reflect ambiguity rather than simple annotator error. We introduce an ambiguity measure that maps a discrete response distribution to a scalar in the unit interval, designed to quantify aleatoric uncertainty in categorical tasks. The measure bears a close relationship to quadratic entropy / Gini-style impurity but departs from those indices by treating an explicit \englanfz{can't solve} category asymmetrically---thereby separating uncertainty arising from class-level indistinguishability from uncertainty due to explicit unresolvability. We analyze the measure's formal properties and contrast its behaviour with a representative ambiguity measure from the literature. Moving beyond description, we develop statistical tools for inference: we propose frequentist point estimators for population ambiguity, and we derive the Bayesian posterior over ambiguity induced by Dirichlet priors on the underlying probability vector, providing a principled account of epistemic uncertainty. Numerical examples illustrate estimation, calibration, and practical use for dataset-quality assessment and downstream machine-learning workflows. Implementations of the ambiguity measures and the code for reproducing the numerical results are available at \url{https://github.com/cklugmann/ambiguity_paper}.
\end{abstract}

\section{Introduction}

\textsc{Crowdsourcing} has emerged as a valuable tool for generating labeled data for machine learning applications across a wide range of domains.  
Of particular importance within this paradigm are categorical annotation tasks, where annotators assign discrete labels by choosing from a fixed set of predefined options.  
Such tasks are at the core of many classification problems, including widely studied examples in computer vision and natural language processing---such as labeling social media posts for hate speech or toxicity.

While seemingly straightforward, the assumption of a single, unambiguous \englanfz{ground truth} label per instance is often challenged in practice \cite{aroyo2015truth}.  
As noted by Inel et al.\ in their \textsc{CrowdTruth} paper \cite{inel2014crowdtruth}, human interpretation of annotation tasks is frequently subjective and influenced by factors such as data quality. This aligns with findings by Peterson et al.\ \cite{peterson2019human}, who argue for explicitly incorporating ambiguity into predictive modeling by moving away from hard classification and instead training on full label distributions for greater robustness.
Similarly, de Vries and Thierens \cite{devries2024learningconfidencetrainingbetter} demonstrate that training with soft labels can yield consistently better classifiers, showing improved calibration and resilience to noise compared to models trained on hard labels.
More recent work by Prabhakaran et al.\ \cite{prabhakaran2021releasing} and Davani et al. \cite{davani2022dealing} further emphasizes the importance of preserving raw, annotator-level labels in NLP datasets, arguing that aggregation can obscure valuable perspectives and introduce representation biases.

Soft labels---discrete probability distributions over possible labels---are often more representative of the true nature of the data than single hard labels.  
However, they are also more difficult to work with directly.  
It is therefore natural to consider summary statistics that capture key aspects of these distributions, such as the degree of aleatoric uncertainty inherent in the task or object being annotated.

To this end, we propose a novel scalar ambiguity measure designed to capture both label uncertainty and the possibility of task-level unsolvability.  
Our approach builds on ideas from information theory, in particular the quadratic entropy \cite{vajda1968bounds}, also known as \textit{Gini impurity} in decision trees \cite{Breiman1984} or the \textit{Gini--Simpson index} in ecology \cite{jost2006entropy}.  
We extend this concept to the annotation setting, including support for an explicit \englanfz{can't solve} option.  
The resulting measure enables efficient filtering, ranking, and exploration of data based on instance-level ambiguity (see Figure~\ref{fig:discrete-task-example}).

\begin{figure}
    \centering
    \includegraphics[width=0.95\linewidth]{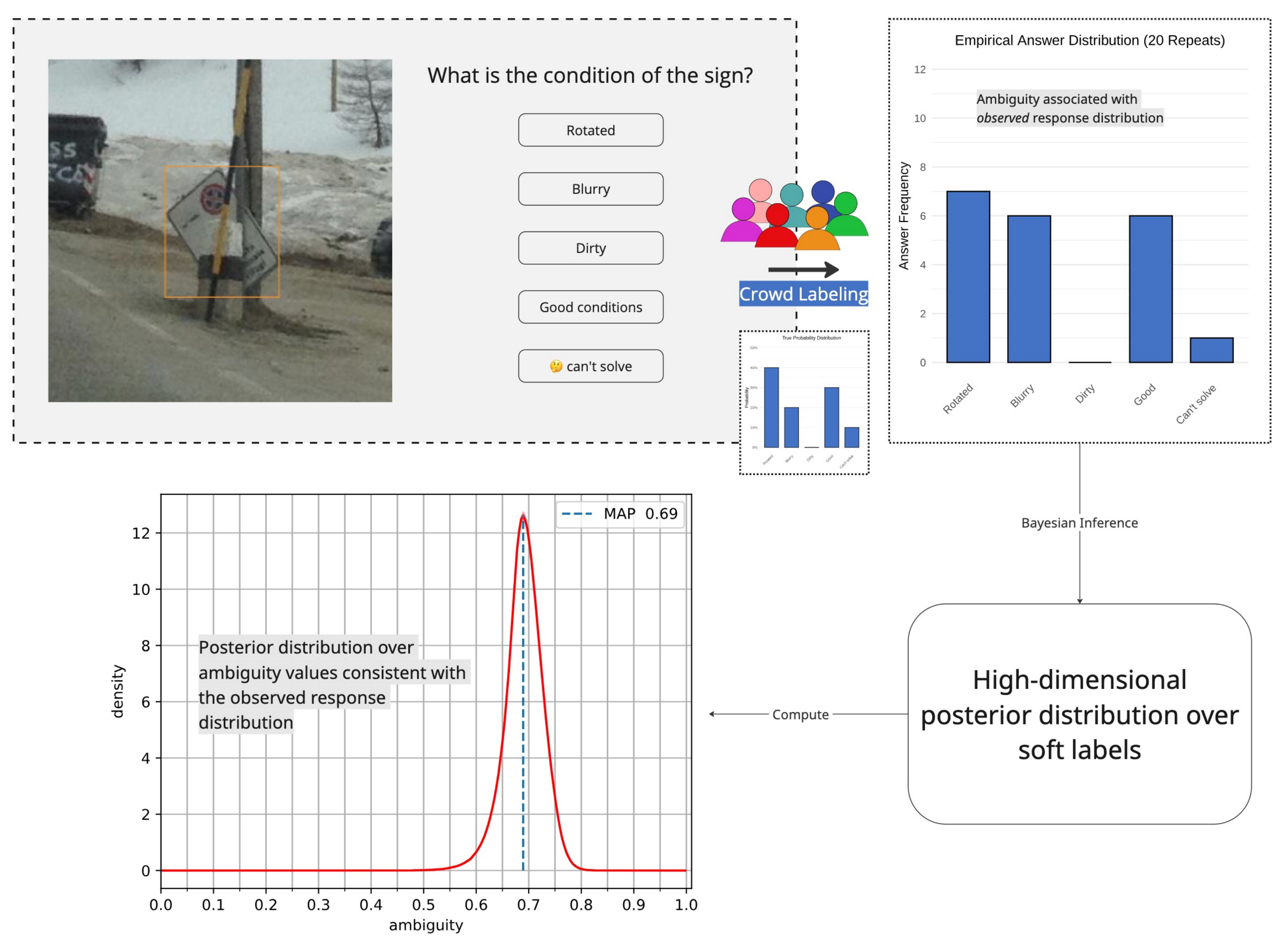}
    \caption{A labeling task is performed by a group of annotators (the \textit{crowd}), resulting in a distribution over possible answers. In this example, the distribution is derived from $20$ individual responses, reflecting high uncertainty regarding a single correct label. This observed response distribution embodies both \textit{epistemic uncertainty}---due to the finite sample of annotations---and \textit{aleatoric uncertainty}, which captures the intrinsic, irreducible uncertainty linked to the task and the annotators. The latter is what we define as \textit{ambiguity}, the central focus of this work. Using Bayesian inference, we estimate the posterior distribution over possible ambiguity values, quantifying how compatible each is with the observed data.}
    \label{fig:discrete-task-example}
\end{figure}

In addition to defining and motivating our ambiguity measure (and a modified variant), we study its mathematical properties and examine how it behaves in practical annotation scenarios.  
Crucially, we place this work in a statistical framework: we investigate how to estimate ambiguity from finite annotation samples using Bayesian inference.  
This allows us to disentangle aleatoric (task-inherent) uncertainty from epistemic (sampling) effects and quantify posterior uncertainty over ambiguity values given observed data.

\section{Definition of the Ambiguity Measure}

Let $\vb{q}$ denote the probability vector representing the possible outcomes of a single categorical annotation task. We aim to focus on the uncertainty or ambiguity encoded by $\vb{q}$. We define ambiguity in natural language as follows:
\begin{quote}
The probability that an annotator deems a task unsolvable, or that even if they consider it solvable, a second annotator arrives at a different answer, assuming the second annotator also considers the task solvable.
\end{quote}
This definition implies that ambiguity arises from two sources: the general solvability or unsolvability of the task on the one hand, and the indistinctness of the remaining categories on the other.
Note also that in our natural language definition, we use the concept of an annotator, who is typically associated with certain characteristics such as the ability to solve a task correctly. However, we abstract away from these traits and treat annotators as operators who work in a statistically indistinguishable manner, providing answers according to the same probability distribution $\mathrm{Cat}(\vb{q})$.

The definition of ambiguity as a probability immediately suggests the event we need to examine. Let $Y_1, Y_2 \sim \mathrm{Cat}(\vb{q})$ be independent random variables. We then define the ambiguity for $\vb{q}$ as the probability of the event
\begin{align*}
\{ Y_1 = \mathrm{cs} \} \cup \{Y_1 \neq \mathrm{cs}, Y_1 \neq Y_2 \} \quad \text{given that} \quad Y_2 \neq \mathrm{cs}\text{.}
\end{align*}
Here, $\mathrm{cs}$ denotes the designated response category indicating task unsolvability (\englanfz{\textbf{c}an't \textbf{s}olve}).
Using the independence of the events separated by $\cup$ and the independence of $Y_1$ and $Y_2$, the probability of the event is written as
\begin{align}
    \mathrm{amb} &\equiv P(Y_1 = \mathrm{cs}) + P( Y_1 \neq \mathrm{cs}, Y_1 \neq Y_2 \mid Y_2 \neq \mathrm{cs}) \nonumber \\
    &= q_{\mathrm{cs}} + (1 - q_{\mathrm{cs}}) P(Y_1 \neq Y_2 \mid Y_1 \neq \mathrm{cs}, Y_2 \neq \mathrm{cs})\text{.} \label{eq:def_amb}
\end{align}
To make the calculation easier to follow, we introduce the conditional probability vector $\vb{p}$ (over the $C$ categories), defined by
\begin{align}
    p_k \equiv P(Y = k \mid Y \neq \mathrm{cs}) = q_k / (1 - q_{\mathrm{cs}}), \quad k = 1, \dots, C\text{,} 
\end{align}
where the probability is computed with respect to $Y \sim \mathrm{Cat}(\vb{q})$. With this, we see that
\begin{align}
    P(Y_1 \neq Y_2 \mid Y_1 \neq \mathrm{cs}, Y_2 \neq \mathrm{cs}) &= \sum_{k=1}^C P(Y_1 = k, Y_2 \neq k \mid Y_1 \neq \mathrm{cs}, Y_2 \neq \mathrm{cs}) \nonumber\\
    &= \sum_{k=1}^C P(Y_1 = k \mid Y_1 \neq \mathrm{cs}) P(Y_2 \neq k \mid Y_2 \neq \mathrm{cs}) \nonumber\\
    &= \sum_{k=1}^C p_k (1-p_k) = 1 - \sum_{k=1}^C p_k^2\text{.} \label{eq:labelflip_prob}
\end{align}
Substituting into \eqref{eq:def_amb} and performing simple algebraic manipulations yields
\begin{align}
    \mathrm{amb} = 1 - \frac{1}{1-q_{\mathrm{cs}}} \sum_{k=1}^C q_k^2
    \quad \text{if} \, q_{\mathrm{cs}} < 1\text{.}
    \label{eq:conditional_amb}
\end{align}
In the case where the entire probability mass is concentrated on the $\texttt{cs}$ category, we set $\mathrm{amb}$ to 1. This makes sense, as we want to assign the maximum ambiguity to the scenario of complete unsolvability. Thus, we finally obtain
\begin{align}
    \mathrm{amb}(\vb{q}) = \mathbbm{1}[q_{\mathrm{cs}} = 1] + \mathbbm{1}[q_{\mathrm{cs}} < 1] \left\{ 1 - \frac{1}{1-q_{\mathrm{cs}}} \sum_{k=1}^C q_k^2 \right\}\text{,}
    \label{eq:def_amb_final}
\end{align}
where $\mathbbm{1}[A(x)]$ is simply the Iverson bracket that is $1$ if the predicate $A$ is true for $x$ and $0$ otherwise.
By convention, an expression of the form $\mathbbm{1}[\texttt{False}]\varphi(x)$ always evaluates to zero, even if $\varphi$ is not defined at the point $x$.

\subsection{Maximum uniform distributions do not have greatest ambiguity}

After constructing the ambiguity measure, equation \eqref{eq:def_amb_final}, it is clear that $\mathrm{amb}$ is in any case in the interval $[0, 1]$, where values closer to $1$ indicate greater ambiguity. It is illustrative to look at extreme cases of the probability vectors to get a feeling for the values that the ambiguity measure actually takes.

In the case where $\vb{q} = ((1-q_{\mathrm{cs}})/C, \dots, (1-q_{\mathrm{cs}})/C, q_{\mathrm{cs}})^{\intercal}$, we obtain $\mathrm{amb} = 1 - (1-q_{\mathrm{cs}})/C$. This means that the \englanfz{severity} of a completely uniform distribution depends on the number of categories from which the annotators can choose. 
Moreover, the ambiguity depends on the degree of unsolvability, regardless of the uniformity among the other categories. Ambiguity is at its lowest when no probability mass is assigned to the $\texttt{cs}$ category, resulting in a value of $(C-1)/C$. Conversely, ambiguity increases as $q_{\mathrm{cs}}$ approaches $1$.

The special case where $q_{\mathrm{cs}} = 0$, meaning that the entire probability mass is distributed among the remaining categories, provides valuable insight. As mentioned, the ambiguity in this case is $(C-1)/C$, which asymptotically approaches $1$.
For the dichotomous case, where $C=2$, we instead have an ambiguity of $1/2$. This may seem implausible, given that the majority vote under such a distribution is maximally uncertain. However, this is consistent with the definition of ambiguity based on label flip probabilities. The probability that a labeler chooses a different answer than the first annotator is the probability that they select the remaining response that the first annotator did not choose, which is $50\%$ under the assumed uniformity among the proper categories.
In other words, if we were to survey not just two, but $n+1$ annotators and examine how often the label changes between trial $i$ and trial $i+1$ for $i=1, \dots, n$, we would expect this to occur approximately $50\%$ of the time.

\subsection{The Modified Ambiguity Measure}

The fact that a probability vector of $(1/2, 1/2, 0)^{\intercal}$ does not exhibit maximum ambiguity in the dichotomous case is understandable in light of the label flip interpretation. However, one could argue that this behavior of the chosen measure is not fully aligned with the general understanding of ambiguity. Therefore, we now propose a modification to the ambiguity measure, which directly follows from the previous considerations.

Semantically, we no longer base the definition of ambiguity on label flip probabilities alone, but rather on \textit{normalized} label flip probabilities. These are probabilities that are forced to take values in the entire interval $[0, 1]$ by being scaled according to the maximum expected rate of label flips for a given number of categories. We observed that the probability in Equation \eqref{eq:labelflip_prob} for a uniform $\vb{p}$ takes the value $(C-1)/C$. Intuitively, it is clear that this must represent the maximum achievable value across all $\vb{p}$. Using the Cauchy-Schwarz inequality, it is also straightforward to explicitly verify that $(C-1)/C$ is indeed an upper bound. It holds that
\begin{align}
    C \sum_{k=1}^C p_k^2 = \norm{\pmb{1}_C}_2^2 \, \norm{\vb{p}}_2^2 \ge \left( \vb{p}^{\intercal} \pmb{1}_C \right)^2 = \left(\sum_{k=1}^C p_k \right)^2 = 1\text{.}
\end{align}
Here, $\pmb{1}_C$ simply denotes the vector of length $C$, which contains only ones. We therefore see that $\sum_{k=1}^C p_k^2 \ge 1/C$, which provides the desired upper bound for \eqref{eq:labelflip_prob}.
The final step is to appropriately adjust the ambiguity measure by normalizing the label flip probability. This gives us the \textit{modified ambiguity measure} $\widetilde{\mathrm{amb}}$ as
\begin{align}
    \widetilde{\mathrm{amb}}(\vb{q}) = \begin{cases}
        q_{\mathrm{cs}} + \frac{C}{C-1} \left\{ (1 - q_{\mathrm{cs}}) - \frac{1}{1-q_{\mathrm{cs}}} \sum_{k=1}^C q_k^2\right\} & \text{if} \, q_{\mathrm{cs}} < 1 \\
        1 & \text{else.}
    \end{cases}
    \label{eq:def_modified_amb}
\end{align}
Note that the modified ambiguity measure can be expressed in terms of the unmodified ambiguity measure as
\begin{align}
    \widetilde{\mathrm{amb}}(\vb{q}) = \frac{1}{C-1} \left( C\cdot\mathrm{amb}(\vb{q}) - q_{\mathrm{cs}} \right)\text{,}
    \label{eq:rel_mod_unmod}
\end{align}
which can be easily verified by substituting \eqref{eq:def_amb_final} into \eqref{eq:def_modified_amb} and performing straightforward algebraic manipulations.

It is easy to see that the modified ambiguity is never smaller than the unmodified ambiguity, i.e., $\widetilde{\mathrm{amb}}(\vb{q}) \ge \mathrm{amb}(\vb{q})$ for all $\vb{q}$.
For the case $q_{\mathrm{cs}} = 1$, this is immediately clear, so without loss of generality, we can restrict our attention to the case where $q_{\mathrm{cs}} < 1$. In that case, $\widetilde{\mathrm{amb}} \ge \mathrm{amb}$ holds if and only if $\mathrm{amb} \ge q_{\mathrm{cs}}$ is satisfied. Now, we have \begin{align} \mathrm{amb} = 1 - \frac{1}{1-q_{\mathrm{cs}}} \sum_{k=1}^C q_k^2 \ge q_{\mathrm{cs}} \iff (1-q_{\mathrm{cs}})^2 \ge \sum_{k=1}^C q_k^2. \end{align}

Since $(1 - q_{\mathrm{cs}})^2 = \left( \sum_{k=1}^C q_k \right)^2$, by the binomial theorem, the inequality holds for every $\vb{q}$, which proves the statement.

\subsection{Qualitative Examples}

We now aim to develop a better understanding of which probability vectors lead to specific values of ambiguity. To do so, we will examine various discrete probability distributions and compare the values produced by the ambiguity measure $\mathrm{amb}$ and the modified ambiguity measure $\widetilde{\mathrm{amb}}$.
In addition, we include in our comparison a related ambiguity measure, denoted as $\mathrm{amb}_0$, which has been employed in prior work by Schwirten et al.\ \cite{luisa_paper} and Klugmann et al.\ \cite{aint_paper}.
For reference, we briefly recall this definition. Using the notation introduced earlier, the ambiguity of a probability vector $\vb{q}$ under this alternative definition can be written as \begin{align}
\mathrm{amb}_0(\vb{q}) = \mathbbm{1}[q_{\mathrm{cs}} = 1] + \mathbbm{1}[q_{\mathrm{cs}} < 1] \left\{ 1 - \frac{1 - q_{\mathrm{cs}}}{2} \frac{C}{C-1} \sum_{k=1}^{C} \left| p_k - \frac{1}{C} \right| \right\}\text{.}
\end{align}
As in the case of the modified ambiguity measure, the normalization constant $(C-1)/C$ appears in the definition of $\mathrm{amb}_0$. When multiplied by a factor of 2, this represents the maximum value that the distance, defined by the sum of component-wise absolute differences between $\vb{p}$ and $(1/C)\pmb{1}_C$, can take.
This sum of absolute distances can be regarded as a special case of a $\phi$-divergence between the (conditional) probability vector $\vb{p}$ and a uniform distribution over $C$ classes, the \textit{total variation} \cite{vajda2007generalized}.

\paragraph{Dichotomous case.}

In Figure \ref{fig:binary_amb} (left), we show eight examples of dichotomous distributions (not counting the $\texttt{cs}$ category), numbered from (1) to (8). To the right, a table presents the values provided by the different definitions of ambiguity for each distribution. \textit{New} refers to the new ambiguity measure $\mathrm{amb}$, \textit{modified} to the label-flip-normalized version $\widetilde{\mathrm{amb}}$, and \textit{old} to the alternative measure $\mathrm{amb}_0$. As theoretically noted, the modified measure never yields smaller values than $\mathrm{amb}$. Furthermore, the numbers suggest that $\mathrm{amb}_0$ generally falls between $\mathrm{amb}$ and $\widetilde{\mathrm{amb}}$, which holds in many cases, though not necessarily in all, as we will demonstrate with additional examples later.

Another notable feature, which has been theoretically proven, is that $\mathrm{amb}$ can reach values up to $0.5$ for distributions that are completely uniform along the proper categories, such as distribution (6). In contrast, the other two ambiguity measures yield significantly larger and more intuitively aligned values. From the perspective of both the modified and old measures, distributions (6) through (8) are equally ambiguous. In the case of distributions (6) and (7), we observe maximum uniformity, while distribution (8) reflects maximum unsolvability, both of which lead to maximum values under $\mathrm{amb}_0$ and $\widetilde{\mathrm{amb}}$. However, $\mathrm{amb}$ only produces maximum ambiguity when the underlying distribution suggests complete unsolvability.

Additionally, we observe that the modified ambiguity measure gives relatively large values, such as in the case of distribution (3), with a value of $0.64$, even when the majority response, with $\max_k q_k = 0.8$, remains fairly identifiable.

\begin{figure}[ht]
    \centering
    \begin{minipage}[t]{0.55\textwidth}\vspace{0pt}%
        %\centering
        \includegraphics[width=\textwidth]{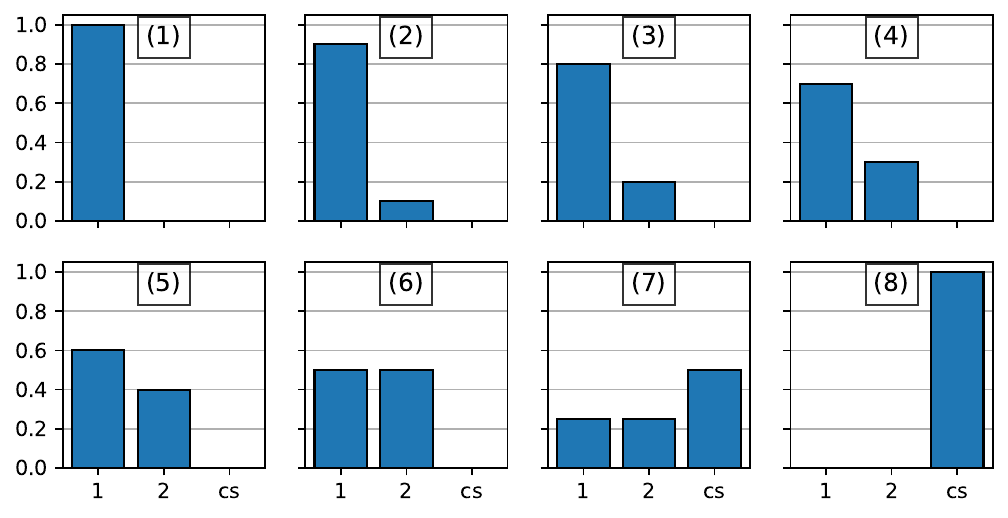}
    \end{minipage}
    \hspace{0.05\textwidth}
    \begin{minipage}[t]{0.2\textwidth}\vspace{0pt}%
        \footnotesize
        \centering
        %\captionof{table}{Comparison of the different ambiguity measures for distributions (1) - (8).}
        \begin{tabular}{cccc}
            \toprule
            & \textbf{new} & \textbf{old} & \textbf{modified} \\
            \midrule
            1 & 0.00 & 0.00 & 0.00 \\
            2 & 0.18 & 0.20 & 0.36 \\
            3 & 0.32 & 0.40 & 0.64 \\
            4 & 0.42 & 0.60 & 0.84 \\
            5 & 0.48 & 0.80 & 0.96 \\
            6 & 0.50 & 1.00 & 1.00 \\
            7 & 0.75 & 1.00 & 1.00 \\
            8 & 1.00 & 1.00 & 1.00 \\
            \bottomrule
        \end{tabular}
    \end{minipage}
    \caption{Eight examples of dichotomous distributions (including the \texttt{cs} category) with varying degrees of ambiguity (left). The table (right) shows the ambiguity values of distributions (1) - (8) for the new ($\mathrm{amb}$), old ($\mathrm{amb}_0$), and modified ($\widetilde{\mathrm{amb}}$) ambiguity measures.}
    \label{fig:binary_amb}
\end{figure}

\paragraph{General categorical case.}

\begin{figure}[ht]
    \centering
    \begin{minipage}[t]{0.55\textwidth}\vspace{0pt}%
        %\centering
        \includegraphics[width=\textwidth]{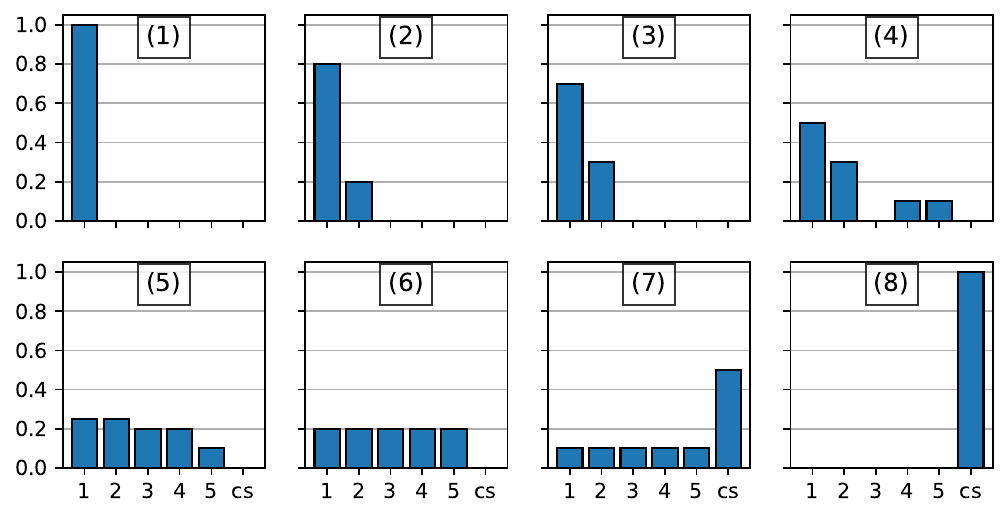}
    \end{minipage}
    \hspace{0.05\textwidth}
    \begin{minipage}[t]{0.2\textwidth}\vspace{0pt}%
        \footnotesize
        \centering
        \begin{tabular}{cccc}
            \toprule
            & \textbf{new} & \textbf{old} & \textbf{modified} \\
            \midrule
            1   & 0.000 & 0.000 & 0.000 \\
2   & 0.320 & 0.250 & 0.400 \\
3   & 0.420 & 0.250 & 0.525 \\
4   & 0.640 & 0.500 & 0.800 \\
5   & 0.785 & 0.875 & 0.981 \\
6   & 0.800 & 1.000 & 1.000 \\
7   & 0.900 & 1.000 & 1.000 \\
8   & 1.000 & 1.000 & 1.000 \\

            \bottomrule
        \end{tabular}
    \end{minipage}
    \caption{Eight examples of categorical distributions (including the \texttt{cs} category) with varying degrees of ambiguity (left). The table (right) shows the ambiguity values of distributions (1) - (8) for the new ($\mathrm{amb}$), old ($\mathrm{amb}_0$), and modified ($\widetilde{\mathrm{amb}}$) ambiguity measures.}
    \label{fig:categorical_amb}
\end{figure}

As before, we examine eight distributions, but now they are general categorical distributions with $C=5$ categories (excluding the $\texttt{cs}$ category). These distributions are shown in Figure \ref{fig:categorical_amb}, and the corresponding ambiguity values under the three different measures are presented in the table on the right. From the table, we immediately observe that, as hinted earlier, the old ambiguity measure does not necessarily yield values that lie between those of the other two measures. 

We also note that the old measure assigns identical values of $0.25$ to distributions (2) and (3), even though we would intuitively attribute greater ambiguity to distribution (3) compared to (2). The new and the (new) modified measures behave more as expected in this regard, though we again observe that the modified measure quickly takes on larger values. Conversely, distribution (6) demonstrates that $\mathrm{amb}$ assigns a value less than 1 to a fully uniform distribution (without any $\texttt{cs}$ component), specifically $0.8$. As anticipated, the old and modified measures coincide for distributions (6) through (8) due to the previously discussed behavior, indicating maximum ambiguity.

\paragraph{Summary of comparison.}
We summarize the results of our comparison in Table \ref{tab:comparison}, where the three ambiguity measures are contrasted regarding their identified advantages and disadvantages.

\begin{table}[h]
    \centering
    \footnotesize
    \begin{tabular}{p{5cm} p{5cm} p{5cm}} % Definierte Spaltenbreiten
        \toprule
        \textbf{New}, $\mathrm{amb}$ & \textbf{Old}, $\mathrm{amb}_0$ & \textbf{Modified}, $\widetilde{\mathrm{amb}}$ \\
        \midrule
        \begin{itemize}
            \item[+] \textcolor{goodblue}{Interpretation}
            \item[-] \textcolor{badorange}{Not maximal for uniform distributions}
        \end{itemize} & 
        \begin{itemize}
            \item[+] \textcolor{goodblue}{Moderate, often plausible values}
            \item[+] \textcolor{goodblue}{Maximal for uniform distributions}
            \item[-] \textcolor{badorange}{Some distributions are considered equivalently ambiguous, even though they shouldn’t be}
            \item[-] \textcolor{badorange}{No straightforward interpretation as probability or similar}
        \end{itemize} & 
        \begin{itemize}
            \item[+] \textcolor{goodblue}{Still a strong interpretation}
            \item[+] \textcolor{goodblue}{Maximal for uniform distributions}
            \item[-] \textcolor{badorange}{Provides too large ambiguity values too quickly}
        \end{itemize} \\
        \bottomrule
    \end{tabular}
    \caption{Summary---comparison of ambiguity measures.}
    \label{tab:comparison}
\end{table}

\section{Inference}

Up to this point, we have assumed that $\vb{q}$ is a given probability vector, for which we only need to deterministically compute ambiguity. However, in real-world applications, we do not know $\vb{q}$. Instead, all we have are the responses $Y_1, \dots, Y_R$ observed from our annotators. There are two main schools of thought in statistics to proceed from this point.

On one hand, frequentist statistics assumes that there is a fixed but unknown parameter vector $\vb{q}$, on which the realizations of the observed responses are based. We can estimate $\vb{q}$ by providing a point estimate, simply calculating the empirical relative frequencies with which each category was selected. This point estimator, $\hat{\vb{q}}$, is subject to the sampling distribution of $(Y_1, \dots, Y_R)$, meaning that different realizations can occur under repeated (hypothetical) experiments. Using standard techniques, one can derive confidence intervals instead of a single point estimate, which cover the true probability vector $\vb{q}$ with a certain probability. However, the downside of this approach is its analytical complexity and the difficulty most users face when interpreting confidence intervals or confidence regions.

The other major school of thought, Bayesian statistics, offers a remedy. Here, $\vb{q}$ itself is treated as a random variable, and a joint probability model $p(\vb{q}, \vb{y})$ is considered, where $\vb{y}$ denotes the observations. Instead of seeking a point estimate for $\vb{q}$, the goal is to assess the plausibility of every possible value of this random variable, given the observed $\vb{y}$. Mathematically, this means determining the posterior distribution $p(\vb{q} \, \vert \, \vb{y})$.

Fortunately, there is a simple joint probability model in which the conditional probability of $\vb{q}$ can be derived analytically. In this model, $\vb{q}$ is treated as a random variable distributed according to $\mathrm{Dir}(\pmb{\alpha}_0)$, a Dirichlet distribution. The distribution of $\vb{y}$ given $\vb{q}$ is then just the product of $\mathrm{Cat}(y_i; \vb{q})$ terms. It turns out that the posterior distribution of $\vb{q}$ given $\vb{y}$ also follows a Dirichlet distribution: $p(\vb{q} \, \vert \, \vb{y}) = \mathrm{Dir}(\pmb{\alpha}_0 + \vb{n})$, where $n_k = \sum_{r=1}^R \mathbbm{1}[y_r = k]$.

This also makes it clear how to determine the distribution of any derived random variable $f(\vb{q})$, where $f$ is any function, in a posterior sense. All that is required is to repeatedly sample $\vb{q}_i \sim \mathrm{Dir}(\pmb{\alpha})$, where $\pmb{\alpha} = \pmb{\alpha}_0 + \vb{n}$, and then apply the function $f$, yielding a sample $f_i = f(\vb{q}_i)$ for $i = 1, \dots, n$. Based on this sample, one can approximate the posterior mean, mode, or credible intervals for $f(\vb{q})$.
In our case, the function $f$ generally corresponds to one of the ambiguity measures for which we aim to determine uncertainty intervals. Before we tackle this task, we first aim to develop better intuition about the properties of random variables realized under $\mathrm{Dir}(\pmb{\alpha})$.

\subsection{Dirichlet Distributions and Entropy}

How does one study which probability vectors $\vb{p}$\footnote{Note that here we use $\vb{p}$ instead of $\vb{q}$. We do this because, in this paragraph, we do not specifically refer to probability vectors arising from discrete annotation tasks. Also, $\vb{p}$ does not correspond to the conditional probability discussed earlier. However, the conditional probability vector resembles a general probability vector (without an emphasized probability for unsolvability), which is why we use this notation.} appear most plausible under a Dirichlet distribution $\mathrm{Dir}(\pmb{\alpha})$? One way is to examine the density function. For distributions over three categories, ternary plots are commonly used, which display the density of the Dirichlet distribution over the standard simplex, as shown in Figure \ref{fig:dir_heatmap}. For higher-dimensional problems, such visualizations are less accessible. Instead, we could examine the marginal distributions of the individual $p_k$ or reduce the probability vectors $\vb{p}$ by applying a scalar function. Since we want to study ambiguity further, the latter is a natural choice.

A tool deeply rooted in information theory to gain insight into any probability distribution is the \textit{information entropy} of a random variable, henceforth referred to simply as entropy. Like ambiguity, entropy measures the uncertainty inherent in a random variable. For a categorical random variable (or equivalently, for the underlying probability vector $\vb{p}$ of the distribution), entropy is defined as $\mathrm{H} = \sum_k p_k \log(1/p_k)$. It is immediately apparent that entropy is non-negative in all cases (since $p_k$ are probabilities and thus not greater than $1$). Denoting the number of categories by $C$, we can also determine a maximum entropy, which is given by $\log C$. For our purposes, it is useful to consider a normalized version of entropy, which we obtain as $\widetilde{\mathrm{H}} = \mathrm{H} / \log C$, i.e., by dividing the entropy by its maximum possible value.

By reducing a probability vector $\vb{p}$ to its (normalized) entropy $\widetilde{\mathrm{H}}(\vb{p})$, we hope to obtain a property of $\vb{p}$ that is easier to study under the assumed distribution of $\vb{p}$. The normalized entropy is high, i.e., close to $1$, when the probability vector $\vb{p}$ exhibits high uniformity. For degenerate distributions, where $\vb{p}$ is \englanfz{one-hot}, the entropy is $0$. For Dirichlet distributions with concentrated parameters in one category, we expect the realized probability vectors $\vb{p}$ to assign a large probability mass to the distinguished category with high probability, resulting in low entropy. For parameter vectors that are not concentrated, we expect, with greater probability, more uniform probability vectors, which exhibit high entropy.

To illustrate how this intuition can sometimes be misleading, even in the case of $C=3$ categories, consider the example of $\vb{p} \sim \mathrm{Dir}(11, 2, 2)$, whose distribution is shown in Figure \ref{fig:dir_heatmap}. Note that this corresponds to the case where we observe responses $\vb{n} = (10, 1, 1)^{\intercal}$ and wish to infer the plausibility of $\vb{p}$ based on a uniform prior $\pmb{\alpha}_0 = (1, 1, 1)^{\intercal}$. The absolute majority of responses---here, $10$---fall into the first category, while the remaining two categories account for only one response each. The sample-based approximation of the posterior distribution of $\widetilde{\mathrm{H}}(\vb{p})$ is shown in Figure \ref{fig:entropy_dist}. Surprisingly, the mode and expected value of the normalized entropy lie above $0.6$. The distribution has a large spread, but it is slightly left-skewed, meaning it has a heavier tail towards smaller values.

In fact, the expected value of $\widetilde{\mathrm{H}}(\vb{p})$ under $\vb{p} \sim \mathrm{Dir}(\pmb{\alpha})$ can even be computed analytically. It is given by
\begin{align}
E\left(\widetilde{\mathrm{H}}(\vb{p})\right) = \frac{1}{\log C} \left\{ \psi(\alpha_0 + 1) - \sum_{k=1}^C \frac{\alpha_k}{\alpha_0} \psi(\alpha_k + 1)\right\},
\end{align}
where $\alpha_0 = \sum_{k=1}^C \alpha_k$ and $\psi$ denotes the digamma function. In this case, the expected value is approximately $0.64$, which matches the observed sample mean to three decimal places.
% The normalized entropy of the mode of $\mathrm{Dir}(\pmb{\alpha})$, i.e., $\widetilde{\mathrm{H}}(\vb{m})$ (with $\vb{m} = (\pmb{\alpha} - \pmb{1}_C) / (\alpha_0 - C)$), in this specific case is approximately $0.52$.

\begin{figure}[h]
    \centering
    \begin{subfigure}[t]{0.47\textwidth}
        \centering
        \includegraphics[width=\textwidth]{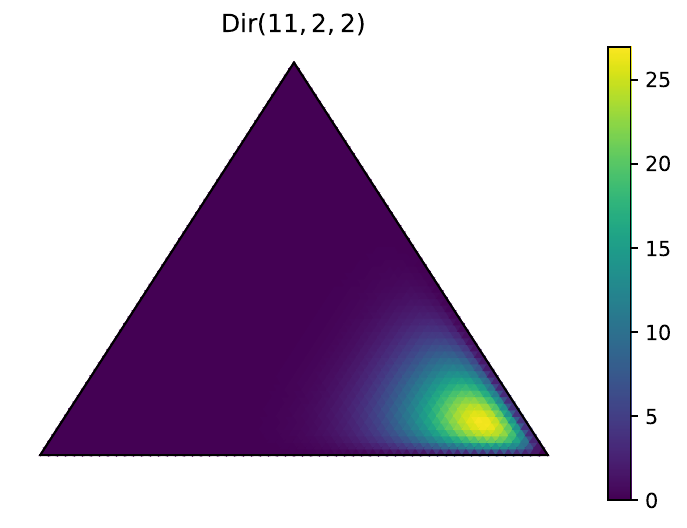}
        \caption{The density of a $\mathrm{Dir}(11, 2, 2)$ distributed random variable, represented as a heatmap over the standard simplex $\Delta_2$}
        \label{fig:dir_heatmap}
    \end{subfigure}
    \hspace{0.01\textwidth}
    \begin{subfigure}[t]{0.47\textwidth}
        \centering
        \includegraphics[width=\textwidth]{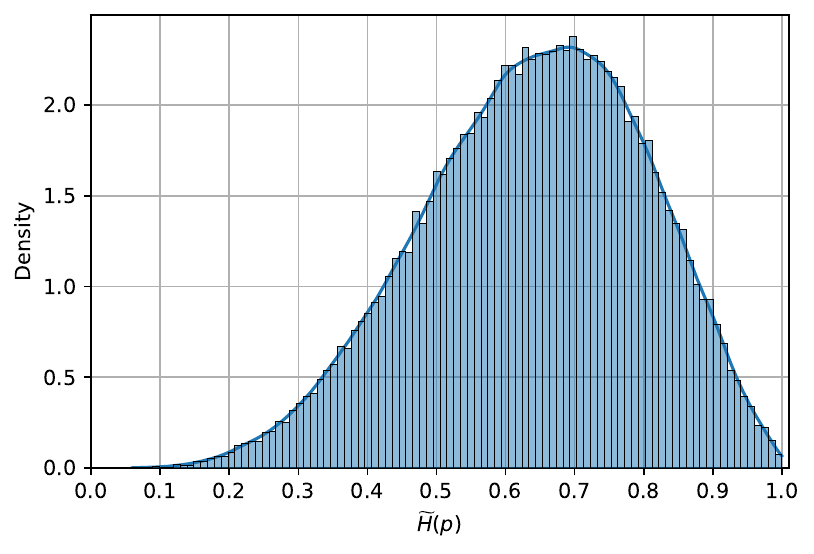}
        \caption{Approximate distribution of the normalized entropy $\widetilde{\mathrm{H}}(\vb{p})$ for $\vb{p} \sim \mathrm{Dir}(11, 2, 2)$.}
        \label{fig:entropy_dist}
    \end{subfigure}
    \caption{Example of a Dirichlet distribution over $C=3$ categories, shown in the left half of the image as a heatmap over the standard simplex. The right half of the image displays the corresponding univariate distribution of the normalized entropy for this distribution.}
\end{figure}

\subsection{Analytic Properties of Ambiguity}

We aim to examine the posterior distribution of ambiguity values while remaining within the conjugate multinomial-Dirichlet model. Specifically, we consider $\vb{q} \sim \mathrm{Dir}(\pmb{\alpha})$ and seek to study how the derived random variable $f(\vb{q})$ is distributed, where $f$ represents one of the proposed ambiguity measures.
Due to the appearance of complicated integrals over subsets of the standard simplex $\Delta_C$ in the general categorical setting, it is not straightforward to obtain a closed-form expression for the distribution of $f(\vb{q})$. However, in the special case of binary classification tasks---i.e., when $C+1 = 3$ including the residual \texttt{cs} category---the integral can indeed be reduced to a univariate Beta integral. The exact form of the resulting probability density functions is derived and presented in Appendix \ref{closed_form_binary_posterior}.

In the general case, we must rely on sampling-based methods to approximate the posterior distribution of $f(\vb{q})$. However, before fully turning to such numerical techniques, we first establish closed-form expressions for the first and second moments, as well as the variance of $f(\vb{q})$, valid for arbitrary values of $C$.
We begin by determining an analytical expression for the expected value $E\left(\mathrm{amb}(\vb{q})\right)$, in analogy to the expected entropy. Under a non-degenerate Dirichlet distribution, we have $P(q_{\mathrm{cs}} = 1) = 0$, which implies that the expectation can be written as
\begin{align} E(\mathrm{amb}(\vb{q})) = E(q_{\mathrm{cs}}) + E\left( (1 - q_{\mathrm{cs}}) \left\{ 1 - \sum_{k=1}^C p_k^2 \right\} \right)\text{.}
\label{eq:exp_amb}
\end{align}
A standard result states that the conditional probability vector $\vb{p}$ and the unsolvability probability $q_{\mathrm{cs}}$ are independent. Moreover, the marginal distributions of these random variables are straightforward to determine. Specifically, we have
\begin{align}
q_{\mathrm{cs}} \sim \mathrm{Beta}(\alpha_{\mathrm{cs}}, \alpha_0 - \alpha_{\mathrm{cs}}), \quad
\vb{p} \sim \mathrm{Dir}(\alpha_1, \dots, \alpha_k)\text{,}
\end{align}
where $\alpha_0 = \alpha_{\mathrm{cs}} + \sum_{k=1}^C \alpha_k$.
Using the independence of the random variables, their marginal distributions, and the analytical expressions for the first and second moments of a Beta-distributed random variable, see appendix \ref{beta_dist} and equations \eqref{eq:first_and_second_moment}, it follows from equation \eqref{eq:exp_amb} that
\begin{align}
E\left(\mathrm{amb}(\vb{q})\right) &= \frac{\alpha_{\mathrm{cs}}}{\alpha_0} + \frac{\alpha_0 - \alpha_{\mathrm{cs}}}{\alpha_0} \left\{ 1- \sum_{k=1}^C E\left(p_k^2\right) \right\} \nonumber \\
%&= 1 - \frac{\alpha_0 - \alpha_{\mathrm{cs}}}{\alpha_0} \sum_{k=1}^C \frac{\alpha_k}{ (\alpha_0 - \alpha_{\mathrm{cs}})^2 } \left\{ \alpha_k + \frac{\alpha_0 - \alpha_{\mathrm{cs}} - \alpha_k }{\alpha_0 - \alpha_{\mathrm{cs}} + 1} \right\} \nonumber \\
&= 1 - \frac{\alpha_0 - \alpha_{\mathrm{cs}}}{\alpha_0} \sum_{k=1}^C \frac{\alpha_k \, (\alpha_k + 1)}{(\alpha_0 - \alpha_{\mathrm{cs}})(\alpha_0 - \alpha_{\mathrm{cs}}+1)} \nonumber \\
&= 1 - \left[ \alpha_0 (\alpha_0 - \alpha_{\mathrm{cs}} + 1) \right]^{-1} \sum_{k=1}^C \alpha_k (\alpha_k + 1)\text{.}\label{eq:expr_amb_final}
\end{align}
As before, the analytical expression for the expected value allows for a comparison with the sampling-based approach. The expected value of the modified ambiguity measure, $E(\widetilde{\mathrm{amb}}(\vb{q}))$, can be directly obtained from equation \eqref{eq:expr_amb_final}, while taking into account the relationship given by \eqref{eq:rel_mod_unmod}.
On the other hand, the expected value under the old ambiguity measure $\mathrm{amb}_0$ cannot be expressed in an elementary form and requires the use of the incomplete Beta function due to the appearance of absolute values.

Beyond the mean, we are certainly also interested in a measure of the dispersion for the respective posterior distribution of ambiguity. A normal distribution would be fully characterized by its first two moments. However, for the distributions arising from the ambiguity measures, the assumption of normality does not generally hold. Nevertheless, the second moment of the distribution, and in particular the second central moment (variance), remains informative to gain a sense of the uncertainty in the ambiguity estimate. For both $\mathrm{amb}$ and $\widetilde{\mathrm{amb}}$, we can write an analytical expression for the variance.

To derive this, we first consider the (non-central) second moment of $\mathrm{amb}(\vb{q})$. It is given by
\begin{equation}
\begin{split}
E\left( \mathrm{amb}^2 \right) = E\left( q_{\mathrm{cs}}^2 \right) &+ 2 E\left( q_{\mathrm{cs}} \, (1 - q_{\mathrm{cs}}) \right)  E\left( 1 - \sum_{k=1}^C p_k^2 \right) \\
&+ E\left( \left[1-q_{\mathrm{cs}}\right]^2 \right) E\left( \left[ 1 - \sum_{k=1}^C p_k^2\right]^2 \right),
\label{eq:second_moment_first}
\end{split}
\end{equation}
where we have once again used the independence of $q_{\mathrm{cs}}$ and $\vb{p}$.
The expressions for the expectations involving $q_{\mathrm{cs}}$ are immediately available from Equations \eqref{eq:first_and_second_moment} and \eqref{eq:mixed_prod_expectation} in Appendix \ref{beta_dist}. The expectation $E\left(\sum_{k=1}^C p_k^2\right)$ was previously calculated when determining $E(\mathrm{amb})$. It can be expressed as
\begin{align}
E\left(\sum_{k=1}^C p_k^2\right) = \frac{\alpha_0}{\alpha_0 - \alpha_{\mathrm{cs}}} \left[1 - E\left(\mathrm{amb}\right)\right]\text{.}
\label{eq:exp_of_sum_of_squares}
\end{align}
This further yields
\begin{align}
E\left( \left[ 1 - \sum_{k=1}^C p_k^2\right]^2 \right) =
1 - 2 \frac{\alpha_0}{\alpha_0 - \alpha_{\mathrm{cs}}} \left[1 - E\left(\mathrm{amb}\right)\right]
+ \sum_{k=1}^C \sum_{l=1}^C E\left( p_k^2 p_l^2\right)\text{.}
\label{eq:squared_term}
\end{align}
For the last term in \eqref{eq:squared_term}, we can use formula \eqref{eq:prod_moments_dirichlet} from Appendix \ref{dirichlet_dist}. This gives us,
\begin{equation}
\begin{split}
E\left( p_k^2 p_l^2 \right) = &\left[
  (\alpha_0 - \alpha_{\mathrm{cs}})
  (\alpha_0 - \alpha_{\mathrm{cs}} + 1)
  (\alpha_0 - \alpha_{\mathrm{cs}} + 2)
  (\alpha_0 - \alpha_{\mathrm{cs}} + 3)
\right]^{-1} \\
&\times \begin{cases}
    \alpha_k(\alpha_k+1)(\alpha_k+2)(\alpha_k+3) & \text{if}\, k=l\text{,} \\
    \alpha_k(\alpha_k+1)\alpha_l(\alpha_l+1) & \text{else.}
\end{cases}
\label{eq:prod_squared_dirichlet}
\end{split}
\end{equation}
Thus, we have all the components needed to determine the expectation of $\mathrm{amb}^2$. After some algebraic transformations, we see that it can be expressed as
\begin{align}
E\left(\mathrm{amb}^2\right) = \mathcal{R} + \mathcal{S} \cdot \left[ 1 - E\left(\mathrm{amb}\right) \right]^2 + 2 \cdot E\left(\mathrm{amb}\right) - 1\text{,}
\label{eq:exp_amb_squared}
\end{align}
where we introduce the abbreviations $\mathcal{R}$ and $\mathcal{S}$, which represent the expressions
\begin{align}
\mathcal{R} &\equiv \frac{
    \sum_{k=1}^C \alpha_k (\alpha_k + 1) \left[ (\alpha_k + 2)(\alpha_k + 3) - \alpha_k(\alpha_k + 1)\right]
}{
    \alpha_0 (\alpha_0 + 1) (\alpha_0 - \alpha_{\mathrm{cs}} + 2)(\alpha_0 - \alpha_{\mathrm{cs}} + 3) 
} \quad \text{and} \\
\mathcal{S} &\equiv \frac{
\alpha_0(\alpha_0 - \alpha_{\mathrm{cs}} + 1)^2
}{
(\alpha_0+1)(\alpha_0 - \alpha_{\mathrm{cs}} + 2)(\alpha_0 - \alpha_{\mathrm{cs}} + 3)\text{.}
}
\end{align}
Using \eqref{eq:exp_amb_squared}, we see that the variance of $\mathrm{amb}$ can be written as
\begin{align}
\mathrm{Var}\left(\mathrm{amb}\right) = E\left( \mathrm{amb}^2 \right) - \left( E\left(\mathrm{amb}\right) \right)^2 = \mathcal{R} + \left( \mathcal{S} - 1\right) \cdot \left[1 - E\left( \mathrm{amb}\right)\right]^2\text{.}
\label{eq:expr_amb_var_final}
\end{align}

By knowing the variance of $\mathrm{amb}$, we can now straightforwardly derive the variance of the modified ambiguity measure $\widetilde{\mathrm{amb}}$. Unsurprisingly, we start with Equation \eqref{eq:rel_mod_unmod}, which relates $\widetilde{\mathrm{amb}}$ to $\mathrm{amb}$. Using the properties of $\mathrm{Var}$, we obtain
\begin{align}
\mathrm{Var}\left( \widetilde{\mathrm{amb}} \right) = \left( \frac{1}{C-1} \right)^2 \left[ C^2 \cdot \mathrm{Var}(\mathrm{amb}) + \mathrm{Var}\left( q_{\mathrm{cs}} \right) - 2 \cdot C \cdot \mathrm{Cov}\left(\mathrm{amb}, q_{\mathrm{cs}}\right)\right]\text{.}
\label{eq:var_amb_tilde}
\end{align}
The covariance term appears because $\mathrm{amb}$ and $q_{\mathrm{cs}}$ are not uncorrelated. Given the independence of $q_{\mathrm{cs}}$ and $\vb{p}$, we can simplify the covariance as follows:
\begin{align}
\mathrm{Cov}\left(\mathrm{amb}, q_{\mathrm{cs}}\right)
&= \mathrm{Var}(q_{\mathrm{cs}}) + E\left( 1 - \sum_{k=1}^C p_k^2 \right) \mathrm{Cov}(1-q_{\mathrm{cs}}, q_{\mathrm{cs}}) \nonumber \\
&= E\left(\sum_{k=1}^C p_k^2\right) \mathrm{Var}(q_{\mathrm{cs}})
= \frac{\alpha_{\mathrm{cs}}}{\alpha_0 (\alpha_0 + 1)}\left[1 - E\left(\mathrm{amb}\right)\right]\text{,}
\label{eq:expr_amb_mod_var}
\end{align}
where we used $\mathrm{Cov}(1-q_{\mathrm{cs}}, q_{\mathrm{cs}}) = -\mathrm{Var}(q_{\mathrm{cs}})$ and basic properties of $\mathrm{Cov}$. The expectation $E\left(\sum_{k=1}^C p_k^2\right)$ was determined in Equation \eqref{eq:exp_of_sum_of_squares}, and the variance of $q_{\mathrm{cs}}$---see equation \eqref{eq:var_beta} in Appendix \ref{beta_dist}---is
\begin{align}
\mathrm{Var}(q_{\mathrm{cs}}) = \frac{\alpha_{\mathrm{cs}} (\alpha_0 - \alpha_{\mathrm{cs}})}{\alpha_0^2(\alpha_0+1)}\text{.}
\end{align}
Substituting these into equation \eqref{eq:var_amb_tilde} provides a closed-form expression for the variance of $\widetilde{\mathrm{amb}}$.

\subsection{A Remark on Frequentist Estimation}

As mentioned initially, there is an alternative approach to the Bayesian school of thought in statistics, the frequentist perspective. Frequentist tools are often used without consciously recognizing their nature. One such example is to determine a point estimate $\hat{\vb{q}}$ and to substitute it into the formula for calculating the ambiguity. The result is a point estimator for the transformed parameter $\mathrm{amb}(\vb{q})$, known as the \textit{plug-in estimator}. For an observation $\vb{n} \sim \mathrm{Multinomial}(n, \vb{q})$ and the point estimator $\hat{\vb{q}}_n = \vb{n} / n$, the plug-in estimator for $\mathrm{amb}(\vb{q})$ is given by
\begin{align} \mathrm{amb}(\hat{\vb{q}}_n) = 1 - \frac{\mathbbm{1}[n_{\mathrm{cs}} < n]}{n(n-n_{\mathrm{cs}})} \sum_{k=1}^C n_k^2\text{.} \end{align}
It can be shown that this estimator is not unbiased, meaning that, in general, $E\left(\mathrm{amb}(\hat{\vb{q}})\right) \neq \mathrm{amb}(\vb{q})$. This is not a specific property of our chosen ambiguity function. In fact, it can be broadly shown that plug-in estimators for \textit{power entropies}---such as the quadratic entropy appearing in the definition of the ambiguity measure (also known as the Gini--Simpson index or Gini impurity)---are generally biased \cite{liitiainen2009statistical}.

The expected value of the plug-in estimator for the ambiguity can be derived analytically.
In the case where the true underlying distribution is degenerate and $q_{\mathrm{cs}} = 1$, the ambiguity, according to our definition, must equal $1$. In this scenario, only degenerate observations can occur, where all responses fall into the $\mathrm{cs}$ category, and the plug-in estimator likewise yields the constant value $1$.
Therefore, we only need to consider the non-trivial case where $q_{\mathrm{cs}} < 1$. In this case, the true ambiguity is given by Equation \eqref{eq:conditional_amb}. For the expected value of the estimator, the following holds:
\begin{align}
    E\left( \mathrm{amb}(\hat{\vb{q}}_n)\right) = 1 - \frac{1}{n} \sum_{l=0}^{n-1} E\left( \frac{\mathbbm{1}[n_{\mathrm{cs}} = l]}{n-n_{\mathrm{cs}}} \sum_{k=1}^C n_k^2\right)\text{,}
    \label{eq:freq_exp_interm}
\end{align}
where we have used that $\mathbbm{1}[n_{\mathrm{cs}} < n] = \sum_{l=0}^{n-1} \mathbbm{1}[n_{\mathrm{cs}} = l]$. The expected value can be rewritten as
\begin{align}
    E\left( \frac{\mathbbm{1}[n_{\mathrm{cs}} = l]}{n-n_{\mathrm{cs}}} \sum_{k=1}^C n_k^2\right) &= E\left( \left. \frac{1}{n-n_{\mathrm{cs}}} \sum_{k=1}^C n_k^2 \, \right\vert \, n_{\mathrm{cs}} = l\right) \varrho_l \nonumber \\
    &= \frac{\varrho_l}{n-n_{\mathrm{cs}}} \sum_{k=1}^C E\left( n_{k}^2 \mid n_{\mathrm{cs}} = l\right)\label{eq:intermed_cond_exp}
\end{align}
where $\varrho_l = P\left( n_{\mathrm{cs}} = l\right)$.
Using the fact that
\begin{align*}
E(n_k^2 \mid n_{\mathrm{cs}} = l) = \mathrm{Var}(n_k \mid n_{\mathrm{cs}} = l) + E(n_k \mid n_{\mathrm{cs}} = l)^2\text{,}
\end{align*}
for all $k = 1, \dots, C$, and that
\begin{align*}
(n_1, \dots, n_C) \mid n_{\mathrm{cs}} = l \sim \mathrm{Multinomial}\left( n - l, \left( \frac{q_1}{1-q_{\mathrm{cs}}}, \dots, \frac{q_C}{1-q_{\mathrm{cs}}}\right) \right)\text{,}
\end{align*}
we obtain from Equation \eqref{eq:intermed_cond_exp}:
\begin{align}
    E\left( \frac{\mathbbm{1}[n_{\mathrm{cs}} = l]}{n-n_{\mathrm{cs}}} \sum_{k=1}^C n_k^2\right) = \frac{(n-l)\, q_k}{(1-q_k)^2}\left[ (1-q_{\mathrm{cs}}-q_k) + (n-l) \, q_k \right]\text{.}
\end{align}
By substitution and simple manipulations, the expectation in \eqref{eq:freq_exp_interm} can ultimately be expressed as
\begin{align}
    E\left( \mathrm{amb}(\hat{\vb{q}}_n)\right) = \left(1 - \frac{1-q_{\mathrm{cs}}^n}{n}\right) - \left[ \frac{1}{1-q_{\mathrm{cs}}}  - \frac{1-q_{\mathrm{cs}}^n}{n(1-q_{\mathrm{cs}})^2} \right] \sum_{k=1}^C q_k^2\text{.}
    \label{eq:plugin_estimator_mean}
\end{align}
As is immediately apparent, the plug-in estimator is not unbiased for the true ambiguity. However, the sequence $( \hat{\vb{q}}_n )_n$ is \textit{consistent}, meaning that for $n \to \infty$, we have $E(\mathrm{amb}(\hat{\vb{q}}_n)) \to \mathrm{amb}(\vb{q})$.

\begin{figure}
    \centering
    \includegraphics[width=0.98\linewidth]{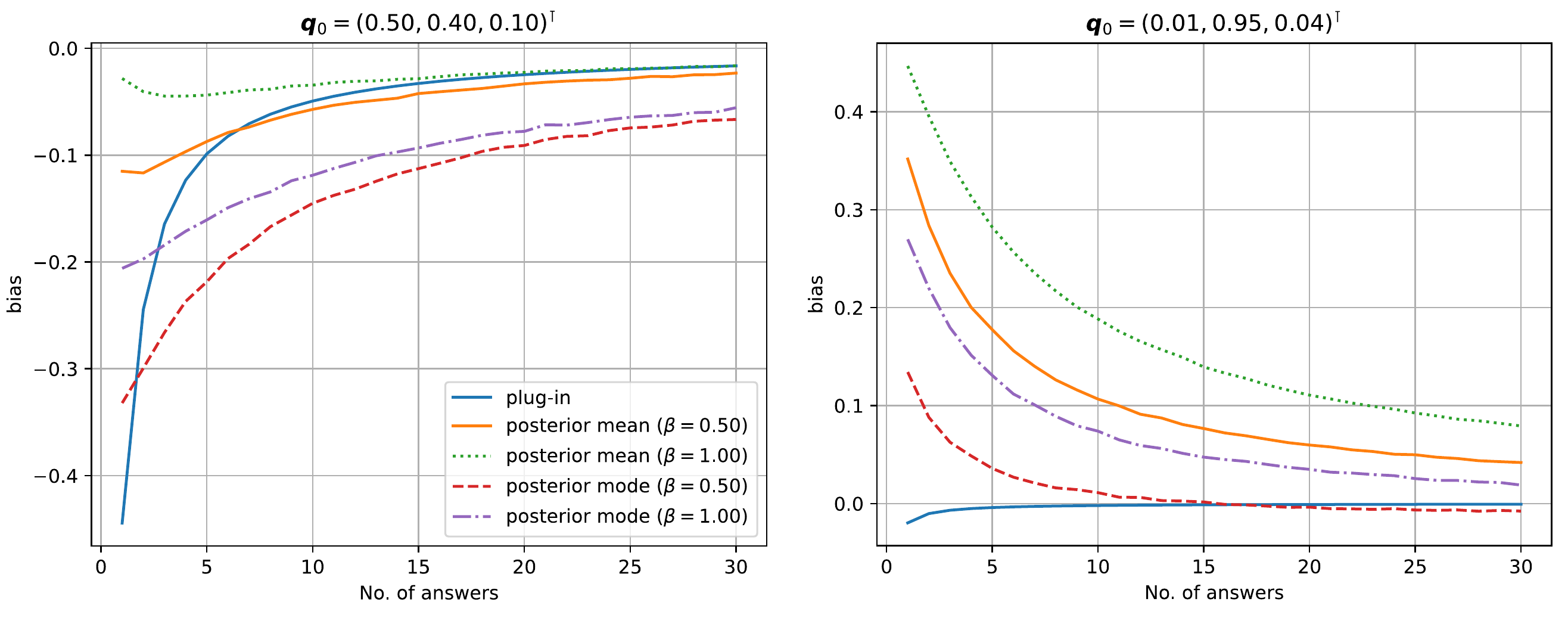}
    \caption{Bias of different estimators of ambiguity as a function of the number of observed answers. Shown are the plug-in estimator and Bayesian estimators (posterior mean and mode) with symmetric Dirichlet priors parameterized by $\beta \in \{0.5, 1.0\}$. Each panel corresponds to a different underlying categorical distribution $\vb{q}_0$, as indicated in the titles. The left panel shows a relatively balanced distribution over two proper categories and one residual \texttt{cant\_solve} category. In contrast, the right panel reflects a highly skewed distribution with most mass concentrated on a single category. In all cases, the estimators are biased but consistent: the bias decreases with increasing sample size. For small numbers of answers, Bayesian estimators are visibly influenced by the prior. Notably, the plug-in estimator underestimates the true ambiguity in expectation and converges from below.}
    \label{fig:biases}
\end{figure}

Moreover, it is easy to see that the plug-in estimator does not \textit{overestimate} the true ambiguity (in expectation). Except for degenerate cases, the sequence
$
\mathrm{bias}_n = E\left(\mathrm{amb}(\hat{\vb{q}}_n)\right) - \mathrm{amb}(\vb{q})
$
is strictly monotonically increasing from below and converges to zero. The proof is provided in Appendix~\ref{plugin_consistency}.

In contrast, estimators based on the Bayesian framework---such as the posterior mean or mode---can also be employed. For various values of $\beta$, which parameterizes the uninformative prior $\mathrm{Dir}(\beta, \dots, \beta)$, the bias of specific Bayesian estimators is shown in Figure~\ref{fig:biases}, alongside the bias of the plug-in estimator for two illustrative examples. These Bayesian estimators are likewise biased and consistent, but their convergence behavior differs: in the left panel of Figure~\ref{fig:biases}, they approach the true ambiguity from below, whereas in the right panel, they do so from above.
\subsection{Choice of Prior}

\begin{figure}
    \centering
    \includegraphics[width=0.98\linewidth]{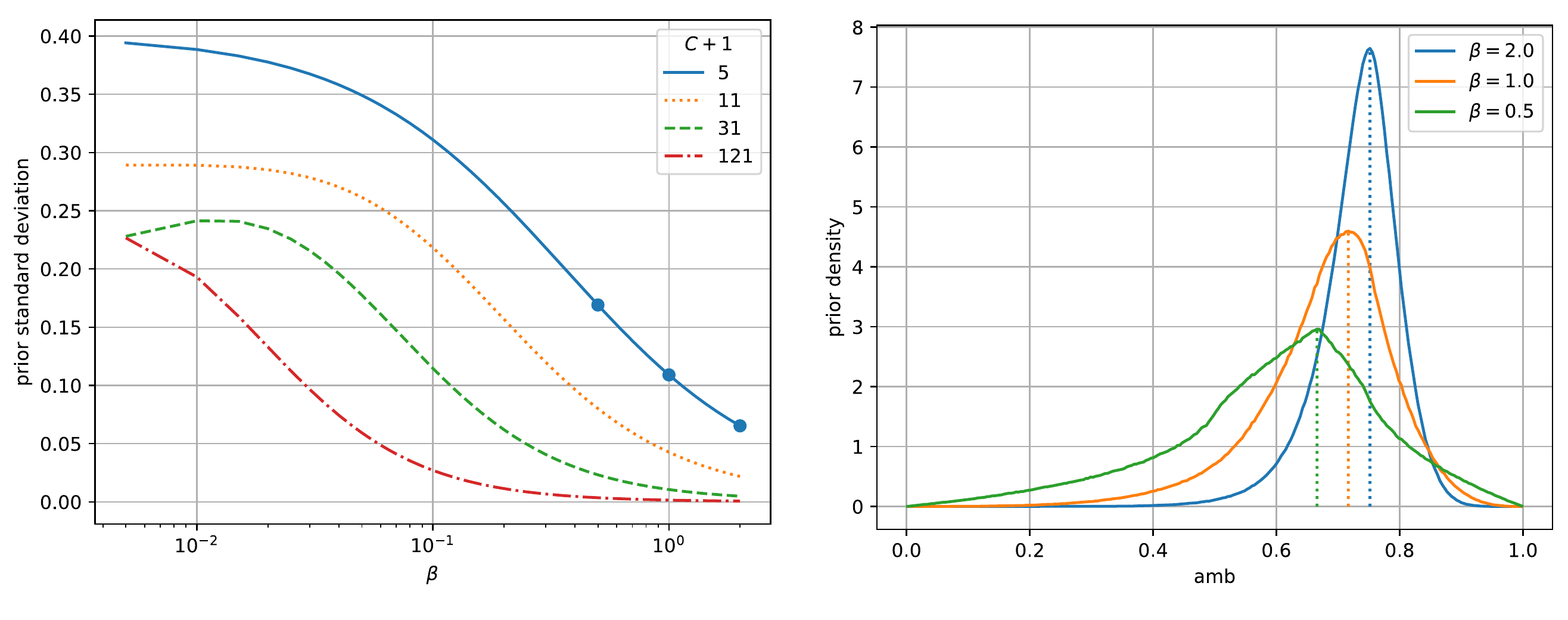}
    \caption{
    Influence of the Dirichlet hyperparameter $\beta$ on the prior distribution of ambiguity. 
    The left panel depicts the standard deviation of the ambiguity measure as a function of $\beta$, shown for varying numbers of answer categories $C+1$. Lower values of $\beta$ correspond to increased prior uncertainty, particularly for smaller category counts.  
    The right panel illustrates the induced prior densities over ambiguity for selected values of $\beta$ at a fixed category count of $C+1=5$. Smaller $\beta$ values result in broader and flatter prior distributions, whereas larger values produce more concentrated priors. Vertical lines indicate the respective modes.
    }
    \label{fig:priors}
\end{figure}

Ambiguity, as defined previously, captures the aleatoric uncertainty inherent in the labeling process. This refers to variability that originates from the population itself. In the following, we turn our attention to epistemic uncertainty, which reflects uncertainty about the ambiguity measure due to limited sample size.

Within the Bayesian framework established earlier, this form of uncertainty is represented by the posterior distribution of ambiguity. We now examine how the choice of prior, and in particular the hyperparameter $\beta$ of the Dirichlet distribution, influences this distribution. Our goal is to understand how different values of $\beta$ affect the concentration, spread, and credibility of the resulting ambiguity estimates.

In the absence of task-specific prior knowledge, we typically assume an \textit{uninformative prior} of the form $\mathrm{Dir}(\beta \pmb{1}_{C+1})$, where $\beta$ is a hyperparameter and $\pmb{1}_{C+1}$ denotes the $(C+1)$-dimensional vector of ones (one for each category, including a potential residual class such as \texttt{cant\_solve}). In the language of statistical physics, $\beta$ is often interpreted as a dimensionless inverse temperature.

A natural question arises: how should the hyperparameter $\beta$ be chosen? As shown earlier, both the posterior mean and mode are, in general, biased estimators of ambiguity, and the choice of $\beta$ affects the extent of this bias. More generally, $\beta$ influences the location, scale, and shape of the prior distribution over ambiguity---this distribution is typically far from uniform. As a result, the prior can significantly impact inference, especially when the sample size is small. Related difficulties in Bayesian inference of entropy have been noted by Nemenmann et al.\ \cite{nemenman2001entropy}.

The left panel of Figure~\ref{fig:priors} shows how the choice of $\beta$ affects the scale (i.e., standard deviation) of the prior distribution over ambiguity, for various numbers of answer categories. Three points on the blue curve highlight the standard deviation of the prior distribution for $C+1 = 5$ categories and $\beta \in \{0.5, 1.0, 2.0\}$. The corresponding prior distributions (obtained via sampling) are shown in the right panel. As $\beta$ increases, the prior shifts toward higher ambiguity values, expressing a belief that high ambiguity is more plausible. At the same time, the distribution becomes narrower.

These plots suggest that smaller values of $\beta$ may lead to more plausible prior beliefs about ambiguity. Nevertheless, we emphasize that there is no universally optimal choice of prior. This is also reflected in the bias behavior of the resulting estimators, as illustrated in Figure~\ref{fig:biases}. In this paper, we adopt the convention of using the prior $\mathrm{Dir}(\pmb{1}_{C+1})$, i.e., $\beta = 1$, which corresponds to a uniform prior over the simplex of $\vb{q}$ and appears to be a reasonable compromise. However, we explicitly acknowledge that improved priors or more elaborate inference techniques are promising directions for future work, but lie beyond the scope of this paper.

\subsection{Examples of the Posterior Distribution}

We illustrate the shape of the posterior distribution of the transformed random variable $\mathcal{a}(\vb{q})$ under a uniform Dirichlet prior, where $\mathcal{a}$ denotes one of the proposed ambiguity measures. We focus here on the binary case, i.e., with $C + 1 = 3$ categories. In this setting, the expression for the posterior density of both the standard and modified ambiguity measures can be written as a univariate integral involving Beta densities. The derivation is provided in Lemma~\ref{lemma:analytic_density} in Appendix~\ref{closed_form_binary_posterior}. The resulting posterior form under a uniform prior is reproduced here. 

Let $\vb{n} = (n^+, n^-, n^{\mathrm{cs}})$ denote the observed label counts in the annotation task. Then, for $0 < a < 1$, the posterior density can be written as
\begin{IEEEeqnarray}{rCl}
    p_{\mathcal{a}(\vb{q})}\left(a \mid n^+, n^-, n^{\mathrm{cs}} \right)
    &=& \int_{g(a)}^a \mathrm{d}u \;
        f_{n^{\mathrm{cs}}+\beta, \, n^+ + n^- + 2\beta}(u) \, \partial_a \xi \, (a, u) \nonumber\\
    && \times \left[
        f_{n^+ + \beta, \, n^- + \beta}(1 - \xi(a,u))
        + f_{n^+ + \beta, \, n^- + \beta}(\xi(a,u))
    \right]
    \text{.}
    \label{eq:amb_posterior_analytic}
\end{IEEEeqnarray}
where $\beta = 1$ corresponds to the uniform prior. The lower integration limit is $g(a) = 0$ for the modified ambiguity measure $\mathcal{a} = \widetilde{\mathrm{amb}}$ and $g(a) = 2a - 1$ for $\mathcal{a} = \mathrm{amb}$. The function $\xi$ depends on both $a$ and the integration variable $u$ and is given by
\begin{align}
    \xi(a, u) = \frac{1}{2} - \frac{1}{2}
    \begin{cases}
        \sqrt{2\frac{1-a}{1-u}-1} & \text{for } \mathcal{a} = \mathrm{amb}, \\
        \sqrt{\frac{1-a}{1-u}} & \text{for } \mathcal{a} = \widetilde{\mathrm{amb}}.
    \end{cases}
\end{align}

Figure~\ref{fig:all_posterior_comparison} shows example posterior distributions for several observed count vectors $\vb{n}$. These posterior distributions are obtained directly from the analytic expression in~\eqref{eq:amb_posterior_analytic} using numerical integration (here: adaptive Simpson’s rule \cite{kuncir1962algorithm}). 
In addition to the posterior density, we also display the exact posterior mean and standard deviation, based on the closed-form expressions derived in Equations~\eqref{eq:expr_amb_final} and~\eqref{eq:expr_amb_var_final} for the standard ambiguity, and in Equation~\eqref{eq:expr_amb_mod_var} for the modified ambiguity.

\begin{figure}
	\centering
	\includegraphics[width=0.98\linewidth]{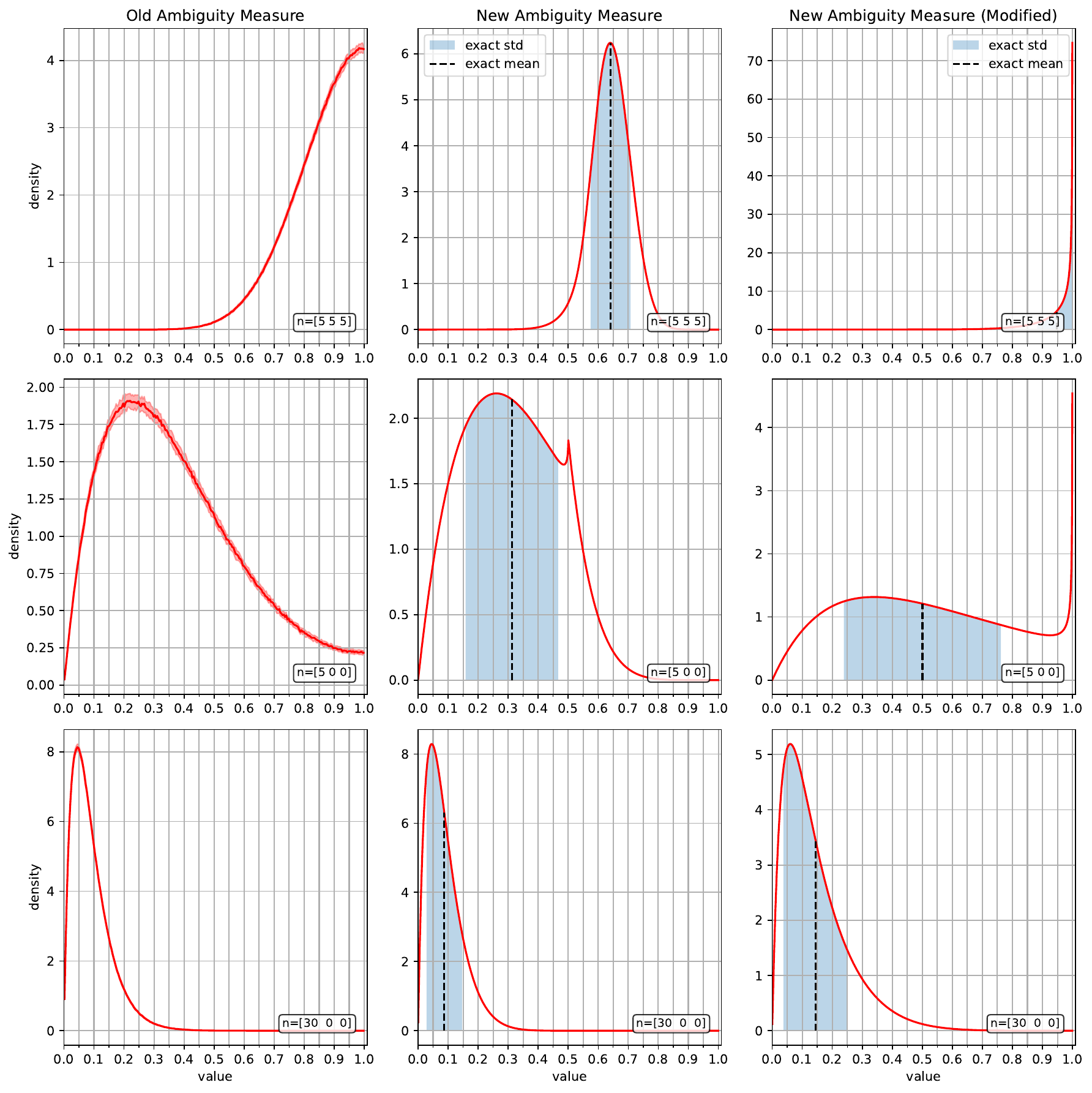}
	\caption{Posterior densities of the different ambiguity measures studied in this work, computed from observed answer frequencies in a binary annotation task (two outcome categories plus a \englanfz{can't solve} category.) Each row shows the same observations across the three measures: left---literature baseline; center---proposed measure; right---modified measure. For the center and right panels, the black dashed line and blue hatched region indicate the analytically derived posterior mean and $\pm1$ standard deviation.}
	\label{fig:all_posterior_comparison}
\end{figure}

For comparison, we also include approximate posterior densities of $\mathrm{amb}_0(\vb{q})$, the previously discussed ambiguity measure that is more common in the literature. To estimate its distribution, we draw $100{,}000$ samples from the analytically known Dirichlet posterior of the probability vectors $\vb{q}$, and estimate the density using $256$ equally spaced histogram bins.
To quantify Monte Carlo uncertainty, we repeat this sampling procedure $100$ times, then display the median (solid red line) and the interquartile range (red shaded region).

All three ambiguity measures exhibit qualitatively and quantitatively distinct behavior, especially for small sample sizes (e.g., the middle row in Figure~\ref{fig:all_posterior_comparison}, corresponding to only five repetitions) or diffuse response signals, i.e., nearly uniform distributions across proper categories with a substantial proportion of \englanfz{can't solve} responses (top row). As the number of repetitions increases, the posterior distributions converge, particularly when observations are consistent with low ambiguity. Notably, the proposed measure (center column) has a posterior density that vanishes for large values, a general property under our chosen prior (see Lemma~\ref{lemma:asymptotics_of_posterior}, Appendix~\ref{closed_form_binary_posterior}), whereas the modified measure (right column) exhibits divergence near its upper bound, which becomes negligible for large sample sizes.

A further distinctive feature of the proposed measure is a non-differentiable point at $0.5$, where the lower limit of the integral in Equation~\eqref{eq:amb_posterior_analytic} changes continuously but with a kink. More generally,  Figure~\ref{fig:all_posterior_comparison} illustrates that posterior mean and standard deviation capture only part of the distributions' characteristics: the posteriors may be skewed or have long tails, so a Normal approximation is not always reliable, even for two- or three-category tasks. Nevertheless, these summary statistics provide a rough guide to the posterior's shape without requiring full computation, particularly for the proposed measure.

\section{Related Work}

The study of ambiguity, entropy, and uncertainty quantification in the context of soft labels has gained significant traction in recent years. Soft labels, which represent distributions over possible answers rather than definitive classifications, allow for a nuanced understanding of annotator uncertainty and the inherent ambiguities present in annotation tasks. One of the foundational frameworks for understanding the implications of disagreement among annotators is the \textsc{CrowdTruth} methodology \cite{inel2014crowdtruth}. This approach posits that disagreement among human annotators can be indicative of the quality of the data being collected, as it acknowledges that human interpretation is often subjective and context-dependent \cite{aroyo2015truth, inel2014crowdtruth}.
%Subsequent work, such as Dumitrache et al.\ \cite{dumitrache2018}, extended this methodology by proposing quality metrics for crowdsourcing tasks and demonstrating its applicability to domains such as medical relation extraction \cite{dumitrache2018crowdsourcing}.
Subsequent work, such as Dumitrache et al.\ \cite{dumitrache2018}, expanded this methodology by introducing the CrowdTruth metrics, which capture the inter-dependencies between workers, input data, and annotations to quantify ambiguity in crowdsourcing tasks. These metrics have been applied to various domains, including medical relation extraction \cite{dumitrache2018crowdsourcing}.
In exploring the design of annotation tasks, Pradhan et al.\ \cite{pradhan2022search} highlight the need to clarify guidelines to mitigate ambiguity in responses from crowd workers. Their research demonstrates that even seemingly straightforward tasks can harbor subtle nuances that lead to varied interpretations among annotators. This aligns with the broader understanding that ambiguity is not merely a nuisance, but can be systematically studied and modeled to improve the quality of annotated datasets.

The concept of entropy plays a critical role in quantifying uncertainty, particularly in the context of discrete probability distributions, and is closely related to the notion of label ambiguity. While traditional Shannon entropy \cite{shannon1948mathematical} provides a fundamental understanding of information content, generalized entropies, such as the class of power entropies including quadratic entropy, have been proposed and extensively studied \cite{vajda2007generalized}. These generalized entropies have found applications in various domains, including biology and bioinformatics, where they are used as diversity measures \cite{zvarova2006genetic, grabchak2017generalized, derian2022tsallis}.
Quadratic entropy, a key component of our proposed ambiguity measure, belongs to this family of power entropies.  The construction of sample estimators for these diversity measures has garnered significant interest.
The Miller-Madow estimator \cite{miller1955note} is a classical and widely used estimator that analytically corrects the bias in naive (plug-in) entropy estimates. Carlton \cite{carlton1969bias} suggests a bias correction by expanding the logarithm, which provides a more generally valid but less accessible approximation. Horáček \cite{horavcek2011traditional} discusses an estimator for diversity measures and power entropies in the context of bioinformatics.
Nemenmann et al.\ \cite{nemenman2001entropy} study the properties of (nearly) uniform Dirichlet priors for entropy inference in a Bayesian framework and demonstrate numerous problems associated with this approach. Their findings are particularly relevant to our research on finite-sample estimation of ambiguity in soft labels, as we encounter fundamentally similar challenges in our analysis.

\section{Discussion}

Our central contribution is an interpretable ambiguity measure derived from annotator abstention (\englanfz{can't solve}) and explicit disagreement. We now discuss the practical utility of this measure, its methodological limitations compared to richer models, and its broader relevance in modern machine learning contexts.

\paragraph{Practical utility.} Because ambiguity values are probability-valued, they are straightforward to interpret, threshold, and communicate. This makes the measure actionable for workflows such as data curation (e.g., filtering items with $\widehat{\mathrm{amb}}>0.4$), benchmark stratification (constructing test slices by ambiguity), and active learning (prioritizing ambiguous items for adjudication). Ranking by ambiguity also helps surface dataset pain points and target annotation effort more efficiently. Formally, the ambiguity measure is a population-level functional of the underlying label distribution, but in practice we only observe finite samples. Our Bayesian Dirichlet--multinomial framework therefore provides both point estimates and posterior distributions, allowing practitioners to separate aleatoric ambiguity from epistemic uncertainty: with sparse annotations, posterior summaries remain wide and highlight the need for more data; with larger samples, estimates concentrate quickly.

\paragraph{Limitations and relationship to richer models.} Incorporating an explicit \englanfz{can't solve} response requires annotation protocols that permit abstention, which may necessitate modest UI changes and calibration; strategic over-use can be mitigated with standard quality controls (qualification tests, attention checks, audits). Methodologically, our Dirichlet--multinomial baseline is intentionally simple and does not model rater-specific heterogeneity or hierarchical task structure. Richer approaches (e.g., hierarchical Bayesian, latent-rater, or annotator-embedding models) can capture these effects and may improve downstream performance in some settings, but they come at the cost of additional complexity and reduced accessibility. We therefore position our measure as a practical, interpretable baseline that complements---rather than supplants---more elaborate modeling when those are warranted.

\paragraph{Broader context.} In modern ML contexts (notably large language models), the capacity to admit \emph{not knowing} is increasingly important, whether for uncertainty quantification, hallucination detection, or explicit abstention mechanisms \cite{xiong2024llmsexpressuncertaintyempirical,farquhar2024detecting,yadkori2024mitigatingllmhallucinationsconformal}. Conceptually, the ambiguity measure is defined on the true (soft-label) population distribution, but in practice it must be inferred from empirical response data. Our framework thus attempts to capture genuine ambiguity as reflected in annotator behavior while also quantifying the sampling variability and noise inherent in finite annotation sets. This dual perspective provides a pragmatic bridge between dataset practices and downstream model needs, yielding actionable signals for dataset maintenance and model training.

\section{Summary}

We introduced a novel, interpretable scalar measure, $\mathrm{amb}(\mathbf{q})$, to quantify aleatoric uncertainty in categorical soft labels. Derived from the probability of a label flip or an explicit \englanfz{can't solve} abstention, this measure asymmetrically extends quadratic entropy to separate uncertainty arising from class-level indistinguishability versus task unresolvability. We rigorously defined this measure, its normalized variant $\widetilde{\mathrm{amb}}(\mathbf{q})$, and presented a comprehensive comparative analysis against a literature baseline. Crucially, we developed a statistical inference framework using the Bayesian Dirichlet--multinomial model, providing closed-form analytical expressions for the expected value and variance of $\mathrm{amb}(\mathbf{q})$. This framework allows practitioners to infer the full posterior distribution, thereby disentangling aleatoric uncertainty (task-inherent) from epistemic uncertainty (sampling error). Positioned as a practical, probability-valued, and actionable baseline, $\mathrm{amb}(\mathbf{q})$ offers immediate utility for data curation, benchmark stratification, and active learning workflows.

\printbibliography

\newpage
\appendix
\section{Properties of Beta and Dirichlet Distributions}

\subsection{Moments of the Beta Distribution}
\label{beta_dist}

We now consider a few elementary properties of the Beta distribution. Let $\mathrm{B}(\alpha, \beta)$ denote the Beta function, defined as
\begin{align}
\mathrm{B}(\alpha, \beta) = \frac{\Gamma(\alpha)\,\Gamma(\beta)}{\Gamma(\alpha+\beta)},
\end{align}
where $\Gamma$ represents the Gamma function. The probability density function of the Beta distribution is given by
\begin{align}
f_{\alpha, \beta}(\pi) = \frac{1}{\mathrm{B}(\alpha, \beta)} \pi^{\alpha - 1} (1-\pi)^{\beta-1},
\end{align}
for $\pi \in [0, 1]$. Let $\pi \sim \mathrm{Beta}(\alpha, \beta)$, meaning $\pi$ is a random variable following a Beta distribution with parameters $\alpha$ and $\beta$. 
Due to the definition of the Beta function, it becomes straightforward to determine the moments of the distribution. For any $n \in \mathbb{N}_{\ge 0}$, we have
\begin{align}
E\left(\pi^n\right) = \frac{1}{\mathrm{B}(\alpha, \beta)} \int_0^1 \pi^n \, \pi^{\alpha-1} \, (1-\pi)^{\beta - 1} \, \dd \pi
= \frac{\mathrm{B}(\alpha+n, \beta)}{\mathrm{B}(\alpha, \beta)}.
\end{align}
Using the definition of the Beta function and exploiting the property of the Gamma function, $\Gamma(z+1) = z \Gamma(z)$, we can derive
\begin{align}
E\left(\pi^n\right) = \frac{ \prod_{i=0}^{n-1}(\alpha+i)  }{\prod_{i=0}^{n-1} (\alpha + \beta + i)}\text{.}\label{eq:moments_of_beta_dist}
\end{align}
The first and second moments of the distribution are immediately obtained as
\begin{align}
E(\pi) = \frac{\alpha}{\alpha+\beta}, \quad E\left(\pi^2\right) = \frac{\alpha \, (\alpha+1)}{(\alpha +\beta)(\alpha+\beta+1)}
\label{eq:first_and_second_moment}
\end{align}
From this, we also derive
\begin{align}
\mathrm{Var}(\pi) = E\left(\pi^2\right) - \left( E\left( \pi \right) \right)^2 = \frac{\alpha \, \beta}{(\alpha + \beta)^2 (\alpha + \beta + 1)}.
\label{eq:var_beta}
\end{align}
In the main text, we encounter random variables of the form $1 - \pi$. For $\pi \sim \mathrm{Beta}(\alpha, \beta)$, it holds that $1 - \pi \sim \mathrm{Beta}(\beta, \alpha)$, and the moments are obtained by swapping the roles of $\alpha$ and $\beta$. Furthermore, we need the expectation of $\pi(1-\pi)$. This can be easily determined as
\begin{align}
E\left( \pi \, (1-\pi) \right) = E\left( \pi \right) - E\left( \pi^2 \right) = \frac{\alpha \, \beta}{(\alpha+\beta)(\alpha+\beta+1)}.
\label{eq:mixed_prod_expectation}
\end{align}

\subsection{Special Moments of the Dirichlet Distribution}
\label{dirichlet_dist}

The Dirichlet distribution is the multidimensional generalization of the Beta distribution. Accordingly, many similarities arise in the calculation of expectations. Instead of the Beta function $\mathrm{B}(\alpha, \beta)$, we now encounter the generalized Beta function, which is defined for a parameter vector $\pmb{\alpha} = (\alpha_1, \dots, \alpha_C)^{\intercal}$ as
\begin{align}
\mathrm{B}(\pmb{\alpha}) = \frac{ \prod_{k=1}^C \Gamma(\alpha_k) }{ \Gamma\left( \sum_{k=1}^C \alpha_k \right)}.
\end{align}
The density of the Dirichlet distribution is proportional to $\prod_{k=1}^C p_k^{\alpha_k - 1}$, where $\vb{p} = (p_1, \dots, p_C)^{\intercal}$ is an element of the standard simplex $\Delta_{C-1}$, meaning all entries $p_k$ lie in $[0, 1]$ and sum to $1$. The normalization constant is given by the generalized Beta function.

In the main text, we encounter the need to compute expectations of the form $E\left( p_k^t p_l^s\right)$, where $k, l$ denote some indices and $s, t$ represent arbitrary powers. Such expectations can be easily determined, analogous to the moment calculations for the Beta distribution, by exploiting the properties of the Gamma distribution. First, from the definition of the (generalized) Beta function, we have
\begin{equation}
\begin{split}
E\left( p_k^s p_l^t \right) = &\frac{ \Gamma\left( \sum_{m=1}^C \alpha_m \right)}{ \Gamma\left((s+t)+\sum_{m=1}^C \alpha_m\right) \prod_{m=1}^C \Gamma(\alpha_m) } \\
&\times \begin{cases}
    \Gamma((s+t)+\alpha_k)\prod_{m\neq k} \Gamma(\alpha_m) &\text{if}\, k=l,\\
    \Gamma(s+\alpha_k)\Gamma(t+\alpha_l) \prod_{\substack{m\neq l \\ m\neq k}} \Gamma(\alpha_m) & \text{else.}
\end{cases}
\end{split}
\end{equation}
Then, using the property $\Gamma(z+1)=z\Gamma(z)$, we can eliminate the Gamma functions and obtain the expected value as
\begin{equation}
\begin{split}
    E\left( p_k^s p_l^t \right) = \frac{1}{\prod_{i=0}^{s+t-1} (\alpha_0 + i) }
    \begin{cases}
        \prod_{i=0}^{s+t-1} (\alpha_k + i) & \text{if}\, k=l\text{,} \\
        \left( \prod_{i=0}^{s-1} (\alpha_k + i) \right)\left( \prod_{j=0}^{t-1} (\alpha_l + j) \right) & \text{else,}
    \end{cases}
    \label{eq:prod_moments_dirichlet}
\end{split}
\end{equation}
where $\alpha_0 \equiv \sum_{k=1}^C \alpha_k$.

\section{Proofs}

\subsection{Strictly Monotonic Consistency of the Plug-In Estimator for $\mathrm{amb}$}
\label{plugin_consistency}

In this appendix, we show that the bias of the frequentist plug-in estimator for ambiguity converges strictly monotonically to zero from below. This is formalized in the following lemma.

\begin{lemma}
Let $\widehat{\mathrm{amb}}_n = \mathrm{amb}(\hat{\vb{q}}_n)$ denote the plug-in estimator of ambiguity based on a sample of size $n$, where $\mathrm{amb}(\cdot)$ is the ambiguity functional defined in Equation~\eqref{eq:def_amb_final}, and $\hat{\vb{q}}_n$ is the empirical distribution of the observed labels. Define the estimator's bias as
\[
\mathrm{bias}_n =E_{\vb{q}}\left(\widehat{\mathrm{amb}}_n\right) - \mathrm{amb}(\vb{q}),
\]
where $\vb{q}$ is the true underlying probability vector. Then, for all non-degenerate $\vb{q}$, the sequence $(\mathrm{bias}_n)_{n \in \mathbb{N}}$ converges strictly monotonically from below to $0$ as $n \to \infty$.
\end{lemma}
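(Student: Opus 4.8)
The plan is to argue directly from the closed-form expectation already established in Equation~\eqref{eq:plugin_estimator_mean}. The degenerate case $q_{\mathrm{cs}}=1$ forces only degenerate observations and hence $\mathrm{bias}_n\equiv 0$, so I restrict to $q_{\mathrm{cs}}<1$, where the true ambiguity is given by Equation~\eqref{eq:conditional_amb}. Subtracting the latter from \eqref{eq:plugin_estimator_mean}, the constant term $1$ and the term $-\tfrac{1}{1-q_{\mathrm{cs}}}\sum_k q_k^2$ cancel, leaving the compact expression
\begin{align}
\mathrm{bias}_n \;=\; \frac{1-q_{\mathrm{cs}}^{\,n}}{n}\left(\frac{\sum_{k=1}^C q_k^2}{(1-q_{\mathrm{cs}})^2}-1\right).
\end{align}
This is the key structural observation: the bias factors as an $n$-independent constant $c \equiv \sum_{k=1}^C q_k^2/(1-q_{\mathrm{cs}})^2 - 1$ times a strictly positive sequence $a_n \equiv (1-q_{\mathrm{cs}}^{\,n})/n$.

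For the sign I use that the proper-category masses sum to $1-q_{\mathrm{cs}}$, so $(1-q_{\mathrm{cs}})^2=\left(\sum_{k=1}^C q_k\right)^2=\sum_{k=1}^C q_k^2+\sum_{k\neq l}q_kq_l\ge\sum_{k=1}^C q_k^2$, whence $c\le 0$; the inequality is strict precisely when at least two proper categories carry positive mass, which is exactly the non-degeneracy hypothesis. Since $a_n>0$ for all $n$, this gives $\mathrm{bias}_n<0$ throughout, i.e.\ convergence \emph{from below}.

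It then remains to show $\mathrm{bias}_n$ is strictly increasing, and since $\mathrm{bias}_n=c\,a_n$ with $c<0$, this is equivalent to $a_n$ being strictly decreasing. I would use the identity $1-q_{\mathrm{cs}}^{\,n}=(1-q_{\mathrm{cs}})\sum_{j=0}^{n-1}q_{\mathrm{cs}}^{\,j}$, so that $a_n/(1-q_{\mathrm{cs}})=\tfrac1n\sum_{j=0}^{n-1}q_{\mathrm{cs}}^{\,j}$ is the running average of the sequence $(q_{\mathrm{cs}}^{\,j})_{j\ge 0}$. For $q_{\mathrm{cs}}\in(0,1)$ this sequence is strictly decreasing, and the running average of a strictly decreasing sequence is strictly decreasing; for $q_{\mathrm{cs}}=0$ the running average is simply $1/n$, again strictly decreasing. (Equivalently, clearing denominators reduces $a_{n+1}<a_n$ to $h(q_{\mathrm{cs}})<1$ for $h(q)=(n+1)q^{n}-nq^{n+1}$, and $h'(q)=n(n+1)q^{n-1}(1-q)>0$ on $(0,1)$ with $h(1)=1$.) Finally $0<a_n\le 1/n\to 0$ yields $\mathrm{bias}_n\to 0$, completing the proof.

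The routine parts are the cancellation producing the compact bias formula and the sign analysis via $\left(\sum q_k\right)^2\ge\sum q_k^2$; the only genuinely non-algebraic ingredient—and thus the main obstacle—is the monotonicity of $a_n=(1-q_{\mathrm{cs}}^{\,n})/n$, which I dispatch by the running-average observation, taking care of the boundary cases $q_{\mathrm{cs}}=0$ and degenerate $\vb{q}$ separately.
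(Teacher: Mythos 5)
Your proof is correct and follows essentially the same route as the paper's: the identical closed-form bias $\mathrm{bias}_n = -\frac{1-q_{\mathrm{cs}}^{\,n}}{n}\bigl(1-\sum_k q_k^2/(1-q_{\mathrm{cs}})^2\bigr)$, the same binomial-expansion argument for the sign, and the same running-average observation establishing that $(1-q_{\mathrm{cs}}^{\,n})/n$ is strictly decreasing (your parenthetical calculus variant is an optional extra). One remark: strict negativity genuinely requires at least two \emph{proper} categories with positive mass, which you state correctly but which is slightly stronger than the paper's reading of non-degeneracy as ``not one-hot''---e.g.\ $\vb{q}$ with $q_1=q_{\mathrm{cs}}=1/2$ and all other $q_k=0$ is not one-hot yet has $\mathrm{bias}_n=0$ for every $n$, so your formulation of the strictness condition is in fact the more precise one.
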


\begin{proof}
Let $\vb{q}$ be a non-degenerate probability vector, meaning that $\vb{q}$ is not a one-hot vector. Note that if $\vb{q}$ is one-hot---either on one of the proper categories or on the $\texttt{cant\_solve}$ category---the plug-in estimator is unbiased for all $n$. We therefore exclude these trivial cases from the following analysis.
Then the true ambiguity is given by the expression in Equation~\eqref{eq:conditional_amb}. The expected value of the plug-in estimator for this case was derived in Equation~\eqref{eq:plugin_estimator_mean}. By straightforward algebraic manipulation, the bias of the estimator can be written as
\begin{align}
    \mathrm{bias}_n = - \frac{1 - q_{\mathrm{cs}}^n}{n} \left\{ 1 - \frac{\sum_{k=1}^C q_k^2}{(1 - q_{\mathrm{cs}})^2} \right\}.
\end{align}
We now analyze the sign and convergence behavior of this expression.

First, the factor $f_n \equiv n^{-1} \, (1 - q_{\mathrm{cs}}^n)$ is strictly positive for all $n$ and $q_{\mathrm{cs}} < 1$. The second factor is strictly positive according to the binomial theorem, the fact that $(1-q_{\mathrm{cs}})^2 = \left(\sum_{k=1}^C q_k\right)^2$ and our assumption of non-degenerate $\vb{q}$.
Thus, the overall bias is strictly negative for all $n$, which shows that the plug-in estimator systematically underestimates the true ambiguity.

To establish strict monotonic convergence of $\mathrm{bias}_n$ to zero from below, it suffices to show that the sequence $(f_n)$ is strictly decreasing. To that end, observe that
\begin{align}
    f_n = \frac{1 - q_{\mathrm{cs}}^n}{n} = \frac{1 - q_{\mathrm{cs}}}{n} \sum_{k=0}^{n-1} q_{\mathrm{cs}}^k.
\end{align}
If $q_{\mathrm{cs}}$ is identically zero, then $f_n$ is clearly strictly monotonically decreasing. Therefore, we can concentrate on the case where $q_{\mathrm{cs}}$ is strictly greater than zero.
Now, define $a_k \equiv q_{\mathrm{cs}}^k$. Since $0 < q_{\mathrm{cs}} < 1$, the sequence $(a_k)$ is strictly decreasing. It follows that
\begin{align}
    \frac{1}{n} \sum_{k=0}^{n-1} a_k > \frac{1}{n+1} \sum_{k=0}^{n} a_k,
\end{align}
because the additional term $a_n$ is strictly smaller than the average of the first $n$ terms. Hence, the sequence $(f_n)$ is strictly decreasing, which implies that $(\mathrm{bias}_n)$ increases strictly monotonically toward zero.
This completes the proof.
\end{proof}

\subsection{Distribution of Ambiguity under Dirichlet-Distributed Probability Vectors in the Special Case $C+1 = 3$}

\label{closed_form_binary_posterior}

Deriving the exact form of the posterior distribution of ambiguity is challenging in the general categorical case, as it involves evaluating high-dimensional integrals over subsets of the standard simplex. However, for the special case involving two proper answer categories plus the residual \texttt{cant\_solve} category---that is, for $C+1 = 3$---the distribution of ambiguity under Dirichlet-distributed probability vectors can be stated more explicitly. This is the subject of the following lemma.

\begin{lemma}
Let $\vb{q} \sim \mathrm{Dir}(\alpha, \beta, \alpha_{\mathrm{cs}})$ be a Dirichlet-distributed random vector over three categories (two proper categories and one \texttt{cant\_solve} category). Let $\mathcal{a}(\vb{q})$ denote either the standard ambiguity measure defined in Equation~\eqref{eq:def_amb_final} or the modified ambiguity defined in Equation~\eqref{eq:def_modified_amb}. Then, the density of the transformed random variable $\mathcal{a}(\vb{q})$ is given by:
\begin{align*}
    p_{\mathcal{a}(\vb{q})}\left( a \mid \alpha, \beta, \alpha_{\mathrm{cs}} \right)
    = \int_{g(a)}^a \dd u \; f_{\alpha_{\mathrm{cs}}, \alpha+\beta}(u) \left[ f_{\alpha,\beta}(1-\xi) + f_{\alpha,\beta}(\xi) \right] \pdv{\xi}{a},
\end{align*}
where $f_{\alpha, \beta}$ denotes the probability density function of the Beta distribution $\mathrm{Beta}(\alpha, \beta)$. The lower bound of integration $g(a)$ depends on the ambiguity variant:
\begin{itemize}
    \item $g(a) = 0$ for the modified ambiguity $\widetilde{\mathrm{amb}}$,
    %\item $g(a) = \max\{0, 2a - 1\}$ for the standard ambiguity $\mathrm{amb}$.
    \item $g(a) = 2a-1$ for the standard ambiguity $\mathrm{amb}$.
\end{itemize}
The function $\xi = \xi(a, u)$ and its partial derivative with respect to $a$, $\partial_a \xi(a, u)$, are given by
\begin{align*}
    \xi(a,u) &= \frac{1}{2} \left( 1 - \sqrt{2 \cdot \frac{1-a}{1-u} - 1} \right), \\
    \partial_a \xi(a,u) &= \frac{1}{2\sqrt{(1-u)[(1-a) + (u-a)]}}.
\end{align*}
for the standard ambiguity $\mathcal{a} = \mathrm{amb}$, and
\begin{align*}
    \xi(a,u) &= \frac{1}{2} \left( 1 - \sqrt{\frac{1-a}{1-u}} \right), \\
    \partial_a \xi(a,u) &= \frac{1}{4\sqrt{(1-a)(1-u)}}.
\end{align*}
for the modified ambiguity $\mathcal{a} = \widetilde{\mathrm{amb}}$.
\label{lemma:analytic_density}
\end{lemma}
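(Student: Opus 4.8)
The plan is to compute the density of $\mathcal{a}(\vb{q})$ via a change of variables on the simplex, exploiting the factorization of the Dirichlet distribution into an independent \englanfz{unsolvability} part and a conditional part. First I would recall from the main text that if $\vb{q} \sim \mathrm{Dir}(\alpha, \beta, \alpha_{\mathrm{cs}})$, then $q_{\mathrm{cs}} \sim \mathrm{Beta}(\alpha_{\mathrm{cs}}, \alpha+\beta)$ is independent of the conditional vector $\vb{p} = (p^+, p^-)$ with $\vb{p} \sim \mathrm{Dir}(\alpha, \beta)$; writing $u = q_{\mathrm{cs}}$ and $p = p^+$ (so $p^- = 1-p$), the joint density of $(u, p)$ is $f_{\alpha_{\mathrm{cs}}, \alpha+\beta}(u)\, f_{\alpha,\beta}(p)$. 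From Equation~\eqref{eq:def_amb_final}, on the non-degenerate set we have $\mathrm{amb} = 1 - \frac{1}{1-u}\bigl[(1-u)^2(p^2 + (1-p)^2)\bigr] = 1 - (1-u)(p^2 + (1-p)^2)$ (using $q_k = (1-u)p_k$), and similarly $\widetilde{\mathrm{amb}}$ via the relation~\eqref{eq:rel_mod_unmod}. The key observation is that both measures depend on $p$ only through the symmetric quantity $p^2 + (1-p)^2 \in [1/2, 1]$, which is why the density will feature the \emph{two} preimages $\xi$ and $1-\xi$.

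Next I would fix $u$ and solve $a = \mathcal{a}(u, p)$ for $p$. For the standard measure, $a = 1-(1-u)(p^2+(1-p)^2)$ rearranges to $p^2 + (1-p)^2 = \frac{1-a}{1-u}$, i.e. $2p^2 - 2p + 1 = \frac{1-a}{1-u}$, whose solutions are $p = \frac12\bigl(1 \pm \sqrt{2\frac{1-a}{1-u}-1}\bigr)$; the smaller root is exactly $\xi(a,u)$ as stated, and the larger is $1-\xi(a,u)$. For the modified measure an analogous computation using~\eqref{eq:rel_mod_unmod} with $C=2$ (so $\widetilde{\mathrm{amb}} = 2\,\mathrm{amb} - q_{\mathrm{cs}} = 2\mathrm{amb}-u$) yields $p^2+(1-p)^2 = \frac{1}{2} + \frac{1}{2}\cdot\frac{1-a}{1-u}$ after simplification, producing $\xi(a,u) = \frac12(1-\sqrt{\frac{1-a}{1-u}})$. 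Then I would apply the standard one-dimensional transformation formula: for fixed $u$, the conditional density of $\mathcal{a}$ at $a$ is $f_{\alpha,\beta}(p)\,\bigl|\pdv{p}{a}\bigr|$ summed over the two preimages $p \in \{\xi, 1-\xi\}$; since $\bigl|\pdv{p}{a}\bigr| = \bigl|\pdv{\xi}{a}\bigr|$ at both roots (the map $p \mapsto 1-p$ has unit Jacobian), this gives $\bigl[f_{\alpha,\beta}(\xi) + f_{\alpha,\beta}(1-\xi)\bigr]\,\partial_a\xi$. Finally, integrating against the marginal $f_{\alpha_{\mathrm{cs}},\alpha+\beta}(u)\,\dd u$ yields the claimed formula, and I would differentiate $\xi$ explicitly to confirm the stated $\partial_a\xi$ expressions (a routine chain-rule computation).

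The main obstacle is getting the \textbf{limits of integration} right: the formula integrates over $u \in (g(a), a)$, and I must verify both endpoints. The upper limit $u < a$ comes from requiring that a valid $p \in [0,1]$ exists, i.e. that the radicand is nonnegative \emph{and} at most $1$; for the standard measure, $2\frac{1-a}{1-u} - 1 \le 1$ forces $1-a \le 1-u$, i.e. $u \le a$, while nonnegativity of the radicand gives the lower bound $2\frac{1-a}{1-u}\ge 1$, i.e. $u \ge 2a-1$, which is exactly $g(a)$ for $\mathrm{amb}$. For the modified measure, $\frac{1-a}{1-u}\le 1$ again gives $u\le a$, but the radicand is automatically nonnegative, so the lower limit is just the natural boundary $u = 0$, matching $g(a) = 0$ for $\widetilde{\mathrm{amb}}$. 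I would also double-check that $\xi \in [0,1]$ throughout the stated $u$-range and that orientation (the sign of $\partial_a\xi$) is handled correctly so no spurious sign appears. Care is also needed near the kink at $a = 1/2$ for the standard measure (where $g(a)$ transitions from $2a-1 > 0$ to the value $0$), but this only affects the \emph{description} of $g$, not the derivation; the Jacobian computation itself is uniform in $a$.
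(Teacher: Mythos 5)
Your proposal is correct, and it reaches the stated formula by a somewhat different route than the paper. You share the key ingredients: the same factorization $u=q_{\mathrm{cs}}\sim\mathrm{Beta}(\alpha_{\mathrm{cs}},\alpha+\beta)$ independent of $p\sim\mathrm{Beta}(\alpha,\beta)$, the same reduction of both measures to the symmetric statistic $p^2+(1-p)^2$, the same quadratic inversion yielding the two branches $\xi$ and $1-\xi$, and the same analysis of when a preimage in $[0,1]$ exists, which fixes the limits $g(a)<u<a$. Where you diverge is the final step: the paper derives the cumulative distribution function $P(\mathcal{a}(\vb{q})\le a)$ as a double integral, identifies the admissible set $\mathcal{B}$ via the inequality $\left(\pi-\tfrac12\right)^2\ge\tfrac14\left(2\tfrac{1-a}{1-u}-1\right)$, and then differentiates with the Leibniz rule, explicitly checking that the boundary terms at $u=a$ and $u=2a-1$ vanish or cancel; you instead apply the two-branch change-of-variables formula for densities conditionally on $u$ (summing $f_{\alpha,\beta}$ over the preimages $\xi$ and $1-\xi$, each weighted by $\abs{\partial_a\xi}$) and then obtain the marginal density by mixing over $u$ against $f_{\alpha_{\mathrm{cs}},\alpha+\beta}$. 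Your route is more direct and sidesteps the boundary-term bookkeeping, but it implicitly relies on the map $p\mapsto a$ being piecewise strictly monotone with nonvanishing derivative (which fails only at $p=1/2$, i.e.\ on the null set $u=2a-1$, so no harm) and on the standard justification that the marginal density is the $u$-mixture of conditional densities; the paper's CDF-plus-Leibniz derivation makes exactly these regularity points explicit. Both arguments are sound, and your Jacobian and limit computations match the lemma.
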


\begin{proof}
To derive the density of the random variable transformed by one of the ambiguity measures, we begin with the cumulative distribution function. In full generality, for arbitrary $C$ and a Dirichlet-distributed probability vector $\vb{q} \sim \mathrm{Dir}(\pmb{\alpha})$, the CDF can be written as
\begin{align}
    P\left(\mathcal{a}(\vb{q}) \le a \mid \pmb{\alpha} \right) = \int_{\mathcal{A}} \dd^C q \, f_{\pmb{\alpha}}(\vb{q}),
    \label{eq: cdf_integral}
\end{align}
where $f_{\pmb{\alpha}}$ is the density of the Dirichlet distribution $\mathrm{Dir}(\pmb{\alpha})$, and $\mathcal{A} \subset \Delta_C$ is the subset of the standard $C$-simplex for which $\mathcal{a}(\vb{q}) \le a$.

We apply the natural coordinate transformation
\begin{align*}
    &\vb{q} = (q_1, \dots, q_C, q_{\mathrm{cs}})^{\intercal} \mapsto (u, \vb{p}^{\intercal})^{\intercal}, \\
    &\text{where} \quad u = q_{\mathrm{cs}}, \quad \text{and} \quad p_k = \frac{q_k}{1 - q_{\mathrm{cs}}}, \quad \text{for } k = 1, \dots, C.
\end{align*}
Under this transformation, the integral in \eqref{eq: cdf_integral} becomes
\begin{align}
    P\left(\mathcal{a}(\vb{q}) \le a \mid \pmb{\alpha} \right)
    = \int_0^1 \dd u \, f_{\alpha_{\mathrm{cs}}, \alpha_0 - \alpha_{\mathrm{cs}}}(u) \int_{\mathcal{B}} \dd^{C-1} p \, f_{\alpha_1, \dots, \alpha_C}(\vb{p}),
\end{align}
where $\mathcal{B} = \mathcal{B}(a, u; C)$ is the subset of the $(C{-}1)$-simplex given by
\begin{align}
    \mathcal{B} = \left\{ \vb{p} \in \Delta_{C-1} \; : \; \mathcal{a}((1-u)p_1, \dots, (1-u)p_C, u) \le a \right\}.
    \label{eq:generic_integral}
\end{align}

We now specialize to the case $C = 2$, and focus on the standard ambiguity $\mathcal{a} = \mathrm{amb}$. The argument for the modified ambiguity $\widetilde{\mathrm{amb}}$ proceeds analogously.

In this case, the inner integral becomes a univariate Beta integral. For $u < 1$, the set $\mathcal{B}$ consists of all $\pi \in [0, 1]$ such that
\begin{align}
    1 - (1-u)\left( \pi^2 + (1-\pi)^2 \right) \le a
    \quad \Leftrightarrow \quad
    \left(\pi - \frac{1}{2}\right)^2 \ge \frac{1}{4}\left(2\frac{1-a}{1-u} - 1\right).
    \label{eq:raw_integral_condition}
\end{align}
This condition is automatically satisfied if the right-hand side of the inequality is nonpositive, which occurs exactly when $u \le 2a - 1$. Conversely, $\mathcal{B} = \emptyset$ if $u > a$. Since $a \le 1$, it follows that $2a - 1 \le a$, determining the bounds of integration. We thus obtain:
\begin{align}
    P\left(\mathrm{amb}(\vb{q}) \le a \mid \alpha, \beta, \alpha_{\mathrm{cs}} \right)
    &= \int_{-\infty}^{2a - 1} \dd u \, f_{\alpha_{\mathrm{cs}}, \alpha+\beta}(u) \nonumber\\
    &\quad + \int_{2a - 1}^a \dd u \, f_{\alpha_{\mathrm{cs}}, \alpha+\beta}(u)
        \int_{\mathcal{B}} \dd \pi \, f_{\alpha, \beta}(\pi).
    \label{eq:intermediate_integral}
\end{align}

For $2a - 1 < u < a$, the inequality in \eqref{eq:raw_integral_condition} defines a symmetric exclusion band around $\pi = 1/2$. Specifically,
\begin{align}
    \mathcal{B} = \left\{ \pi \in [0, 1] : \abs{\pi - \tfrac{1}{2}} \ge \tfrac{1}{2} \sqrt{2\frac{1-a}{1-u} - 1} \right\}.
\end{align}
Letting $\xi^{\pm}(a, u) = \frac{1}{2} \left( 1 \pm \sqrt{2 \frac{1-a}{1-u} - 1} \right)$, we can write:
\begin{align}
    \int_{\mathcal{B}} \dd \pi \, f_{\alpha,\beta}(\pi)
    = \int_0^{\xi^{-}} \dd \pi \, f_{\alpha,\beta}(\pi)
    + \int_{\xi^{+}}^1 \dd \pi \, f_{\alpha,\beta}(\pi)
    = 1 - \left[ I_{\alpha,\beta}(\xi^{+}) - I_{\alpha,\beta}(\xi^{-}) \right],
\end{align}
where $I_{\alpha,\beta}(x)$ denotes the cumulative distribution function of $\mathrm{Beta}(\alpha, \beta)$.

To obtain the density $p_{\mathrm{amb}}(a)$, we differentiate the right-hand side of \eqref{eq:intermediate_integral} with respect to $a$. Applying the Leibniz integral rule (justified by the regularity of the integrands), we obtain:
\begin{align}
    p_{\mathrm{amb}}(a)
    &= 2 f_{\alpha_{\mathrm{cs}}, \alpha+\beta}(2a - 1) \nonumber\\
    &\quad + \left. f_{\alpha_{\mathrm{cs}}, \alpha+\beta}(a) \left[ 1 - \left( I_{\alpha,\beta}(\xi^{+}) - I_{\alpha,\beta}(\xi^{-}) \right) \right] \right|_{u=a} \nonumber\\
    &\quad - 2 \left. f_{\alpha_{\mathrm{cs}}, \alpha+\beta}(2a-1) \left[ 1 - \left( I_{\alpha,\beta}(\xi^{+}) - I_{\alpha,\beta}(\xi^{-}) \right) \right] \right|_{u=2a-1} \nonumber\\
    &\quad + \int_{2a - 1}^a \dd u \, f_{\alpha_{\mathrm{cs}}, \alpha+\beta}(u)
        \left[ f_{\alpha,\beta}(\xi^{-}) \pdv{\xi^{-}}{a}
              - f_{\alpha,\beta}(\xi^{+}) \pdv{\xi^{+}}{a} \right].
    \label{eq:pdf_intermediate_clean}
\end{align}
It is easy to check that $\xi^{-}(a, a) = 0$, $\xi^{+}(a, a) = 1$, so the second term vanishes. Also, at $u = 2a - 1$ we have $\xi^{\pm}(a, u) = 1/2$, and hence the first and third terms cancel. Finally, observe that
\[
    \xi^{+} = 1 - \xi^{-}, \quad \partial_a \xi^{+} = -\partial_a{\xi^{-}}\text{.}
\]
Hence, with $\xi(a, u) \equiv \xi^{-}(a, u)$, we arrive at the final form:
\begin{align}
    p_{\mathrm{amb}}(a)
    = \int_{2a - 1}^{a} \dd u \; f_{\alpha_{\mathrm{cs}}, \alpha+\beta}(u)
      \left[ f_{\alpha,\beta}(\xi(a,u)) + f_{\alpha,\beta}(1 - \xi(a,u)) \right]
      \pdv{\xi}{a},
\end{align}
as claimed.
\end{proof}

%% DEBUG
\begin{lemma}[Asymptotics at $a\to1^{-}$]
Suppose $\alpha,\;\beta,\;\alpha_{\mathrm{cs}}\ge1$, and let
\[
p_{\mathrm{amb}}(a)
= p_{\mathrm{amb}(\mathbf{q})}(a \mid \alpha,\beta,\alpha_{\mathrm{cs}})
\]
be the density of the standard ambiguity transform, and
\[
p_{\widetilde{\mathrm{amb}}}(a)
= p_{\widetilde{\mathrm{amb}}(\mathbf{q})}(a \mid \alpha,\beta,\alpha_{\mathrm{cs}})
\]
be the density of the modified ambiguity transform, as given in lemma \ref{lemma:analytic_density}. Then
\begin{align*}
\lim_{a\to1^{-}} p_{\mathrm{amb}}(a) = 0,
\quad \lim_{a\to1^{-}} p_{\widetilde{\mathrm{amb}}}(a) = +\infty.
\end{align*}
\label{lemma:asymptotics_of_posterior}
\end{lemma}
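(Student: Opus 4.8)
The strategy is to analyze the integral formula for $p_{\mathcal{a}}(a)$ from Lemma~\ref{lemma:analytic_density} in the limit $a \to 1^{-}$, treating the two variants separately since their lower integration limits $g(a)$ differ. For the standard ambiguity, the integration range is $[2a-1, a]$, which shrinks to the single point $\{1\}$ as $a \to 1^{-}$; the length of this interval is $a - (2a-1) = 1-a \to 0$. The plan is to show that the integrand stays bounded (or at least that its product with the interval length vanishes), so that $p_{\mathrm{amb}}(a) \to 0$. For the modified ambiguity, the integration range is $[0, a]$, which does \emph{not} shrink, so the limit is governed by the behavior of the integrand near $u = a \to 1$, where $\partial_a \xi = 1/(4\sqrt{(1-a)(1-u)})$ blows up and—since $\alpha_{\mathrm{cs}} \ge 1$—the Beta density $f_{\alpha_{\mathrm{cs}}, \alpha+\beta}(u)$ need not vanish. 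The plan is to extract a lower bound on the integral that diverges.

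\textbf{Standard case, $p_{\mathrm{amb}}(a) \to 0$.} First I would substitute $u = (2a-1) + (1-a)t$ with $t \in [0,1]$, so $\dd u = (1-a)\dd t$, turning the integral into $(1-a)\int_0^1 \dots \dd t$. I then need to check that the rescaled integrand is bounded uniformly as $a \to 1^{-}$. On this range, $1-u$ lies between $1-a$ and $2(1-a)$, so the ratio $\frac{1-a}{1-u}$ stays in $[\tfrac12, 1]$; hence $\sqrt{2\frac{1-a}{1-u}-1} \in [0,1]$, which keeps $\xi(a,u) \in [0, \tfrac12]$ bounded away from the endpoints where the Beta density $f_{\alpha,\beta}$ could misbehave (recall $\alpha, \beta \ge 1$, so $f_{\alpha,\beta}$ is continuous and bounded on $[0,1]$ anyway). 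The factor $\partial_a \xi = \frac{1}{2\sqrt{(1-u)[(1-a)+(u-a)]}}$ needs more care: with the substitution, $(1-u) = (1-a)(2-t)$ and $(1-a)+(u-a) = (1-a)(2 - 2 + \dots)$—I'd compute $u - a = (2a-1) + (1-a)t - a = (a-1) + (1-a)t = (1-a)(t-1)$, so $(1-a)+(u-a) = (1-a)t$, giving $\partial_a \xi = \frac{1}{2(1-a)\sqrt{t(2-t)}}$. The $(1-a)^{-1}$ here exactly cancels the $(1-a)$ prefactor from $\dd u$, leaving $\int_0^1 \frac{\dd t}{2\sqrt{t(2-t)}}$ times bounded factors—a \emph{convergent} integral independent of $a$—all multiplied by $f_{\alpha_{\mathrm{cs}}, \alpha+\beta}(u) \to f_{\alpha_{\mathrm{cs}},\alpha+\beta}(1)$, which is finite since $\alpha+\beta \ge 1$ (actually $\ge 2$). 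Wait—this shows the limit is a finite constant, not zero. So I would need to be more careful: $f_{\alpha_{\mathrm{cs}},\alpha+\beta}(u) = c\, u^{\alpha_{\mathrm{cs}}-1}(1-u)^{\alpha+\beta-1}$, and with $\alpha+\beta-1 \ge 1$, we have $(1-u)^{\alpha+\beta-1} = O((1-a)^{\alpha+\beta-1}) \to 0$. That extra vanishing factor, which the crude bounding step dropped, is what forces $p_{\mathrm{amb}}(a) \to 0$. The main obstacle in this half is bookkeeping the competing powers of $(1-a)$ carefully rather than bounding too crudely.

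\textbf{Modified case, $p_{\widetilde{\mathrm{amb}}}(a) \to +\infty$.} Here the integral is $\int_0^a f_{\alpha_{\mathrm{cs}},\alpha+\beta}(u)\,[f_{\alpha,\beta}(1-\xi)+f_{\alpha,\beta}(\xi)]\,\frac{\dd u}{4\sqrt{(1-a)(1-u)}}$ with $\xi = \tfrac12(1-\sqrt{(1-a)/(1-u)})$. The plan is to lower-bound: since all densities are nonnegative, drop $f_{\alpha,\beta}(1-\xi)$ and restrict the $u$-integral to a convenient sub-interval, say $u \in [0, 1/2]$, on which $1-u \ge 1/2$ and $\frac{1-a}{1-u} \le 2(1-a) \to 0$, so $\xi \to \tfrac12^{-}$ and $f_{\alpha,\beta}(\xi) \to f_{\alpha,\beta}(\tfrac12) > 0$, while $f_{\alpha_{\mathrm{cs}},\alpha+\beta}(u)$ is bounded below by a positive constant on a sub-interval (e.g.\ near $u = \tfrac{\alpha_{\mathrm{cs}}-1}{\alpha_0-1}$ or simply wherever the density is positive). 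Then the whole expression is $\ge \frac{C'}{4\sqrt{1-a}}\int_0^{1/2} \frac{\dd u}{\sqrt{1-u}}$ for a positive constant $C'$, and $\int_0^{1/2}(1-u)^{-1/2}\dd u$ is a fixed positive number; hence $p_{\widetilde{\mathrm{amb}}}(a) \ge \frac{C''}{\sqrt{1-a}} \to +\infty$. The only subtlety is making the lower bounds on the Beta densities precise and uniform for $a$ close to $1$, which is routine given $\alpha, \beta, \alpha_{\mathrm{cs}} \ge 1$ (so the relevant densities are continuous and strictly positive on the open interval $(0,1)$). I expect the sign-of-$(1-a)$-powers accounting in the standard case to be the genuine pitfall; the modified case is a straightforward divergent-lower-bound argument.
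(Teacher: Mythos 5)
Your proposal is correct in both halves, and it follows the same overall strategy as the paper---analyze the integral representation of Lemma~\ref{lemma:analytic_density} directly as $a\to1^-$, using that all Beta densities involved are bounded when $\alpha,\beta,\alpha_{\mathrm{cs}}\ge1$---but the technical route differs in each half. For the standard measure, the paper sets $a=1-\varepsilon$, absorbs the $(1-u)^{-1/2}$ factor of $\partial_a\xi$ into the bounded density $f_{\alpha_{\mathrm{cs}},\alpha+\beta}$, and bounds the remaining integrable singularity over the shrinking interval $[1-2\varepsilon,\,1-\varepsilon]$ by $\int_0^\varepsilon s^{-1/2}\,\mathrm{d}s=2\sqrt{\varepsilon}$, i.e.\ an $O(\sqrt{1-a})$ bound; you instead rescale $u=(2a-1)+(1-a)t$, note the exact cancellation of $(1-a)$ between $\mathrm{d}u$ and $\partial_a\xi=\frac{1}{2(1-a)\sqrt{t(2-t)}}$, and extract the decay from $(1-u)^{\alpha+\beta-1}\le\bigl(2(1-a)\bigr)^{\alpha+\beta-1}$ on the integration range. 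Your mid-proof self-correction is exactly the right fix---a constant bound on $f_{\alpha_{\mathrm{cs}},\alpha+\beta}$ alone only gives $O(1)$, and the vanishing of that density at $u=1$ must be retained---and it actually yields a sharper rate, $O\bigl((1-a)^{\alpha+\beta-1}\bigr)$, than the paper's $O(\sqrt{1-a})$. For the modified measure, the paper substitutes $u=\omega a$ and uses dominated convergence to obtain the precise asymptotics $p_{\widetilde{\mathrm{amb}}}(1-\varepsilon)\sim K\varepsilon^{-1/2}$, whereas you give a more elementary divergent lower bound of order $(1-a)^{-1/2}$ by localizing to a fixed subinterval of $(0,1/2]$ on which $f_{\alpha_{\mathrm{cs}},\alpha+\beta}$ is bounded below and $f_{\alpha,\beta}(\xi)\to f_{\alpha,\beta}(1/2)>0$ uniformly; this suffices for the claimed limit $+\infty$ (your restriction away from $u=0$, where the density can vanish when $\alpha_{\mathrm{cs}}>1$, is indeed necessary and correctly flagged), though it does not recover the exact asymptotic constant that the paper's computation provides.
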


\begin{proof}
We treat each case separately.\vspace{-\baselineskip}
\paragraph{(i) Standard ambiguity.}
Let $0 < \varepsilon \ll 1$
Recall from the previous lemma that
\[
p_{\mathrm{amb}}(a)
= \int_{g(a)}^{a}
    f_{\alpha_{\mathrm{cs}},\,\alpha+\beta}(u)\,
    \bigl[f_{\alpha,\beta}(1-\xi) + f_{\alpha,\beta}(\xi)\bigr]\,
    \frac{\partial \xi}{\partial a}(a,u)\,
    \mathrm{d}u,
\]
with $g(a)=2a-1$ and
\[
\frac{\partial \xi}{\partial a}(1-\varepsilon,u)
= \frac{1}{2\sqrt{(1-u)\bigl[2\varepsilon - (1-u)\bigr]}}.
\]
Since $\alpha,\beta,\alpha_{\mathrm{cs}}\ge1$, the Beta densities
$f_{\alpha_{\mathrm{cs}},\alpha+\beta}(\,\cdot\,)$ and $f_{\alpha,\beta}(\,\cdot\,)$
are bounded above. Thus, for some constant $K>0$,
\[
p_{\mathrm{amb}}(1-\varepsilon)
\le K
\int_{1-2\varepsilon}^{1-\varepsilon}
\frac{\mathrm{d}u}{\sqrt{2\varepsilon - (1-u)}}.
\]
Note that we have absorbed the additional factor $(1-u)^{-1/2}$ from $\partial_a \xi$ into the first beta density.
Perform a variable substitution to see that
\[
p_{\mathrm{amb}}(1-\varepsilon)
\le K \int_{0}^{\varepsilon} s^{-1/2}\,\mathrm{d}s
= 2K\sqrt{\varepsilon}
\;\xrightarrow{\;\varepsilon\to0\;}0.
\]
\paragraph{(ii) Modified ambiguity.}
Here, the lower limit of integration is given by $g(a) = 0$. The density of $\widetilde{\mathrm{amb}}$, evaluated at $0 < a < 1$, can be rewritten as
\begin{align*}
    p_{\widetilde{\mathrm{amb}}}(a) = a \int_0^1 \mathrm{d}\omega \; f_{\alpha_{\mathrm{cs}}, \alpha+\beta}(\omega a) 
    \left.\pdv{\xi}{a}\right|_{(a, \omega a)} 
    \left[ f_{\alpha, \beta}(\xi(a, \omega a)) + f_{\alpha, \beta}(1 - \xi(a, \omega a)) \right],
\end{align*}
where we have used the substitution $u \mapsto \omega = u/a$.

Let $0 < \varepsilon \ll 1$ and $\omega \in (0, 1)$. Define
\begin{align*}
    v(\varepsilon, \omega) \equiv \xi (1-\varepsilon, \omega(1-\varepsilon)) = \frac{1}{2}\left(1 - \sqrt{\frac{\varepsilon}{1 - \omega(1-\varepsilon)}}\right),
\end{align*}
which clearly converges to $1/2$ as $\varepsilon \to 0$, for any admissible $\omega$.

The partial derivative of $\xi$ appearing under the integral, evaluated at $a = 1 - \varepsilon$, is given by
\begin{align}
    \partial_a \xi \, (1 - \varepsilon, \omega(1-\varepsilon)) = \frac{1}{4} \varepsilon^{-1/2} (1 - \omega(1 - \varepsilon))^{-1/2}.
    \label{eq:partial_at_1_minus_eps}
\end{align}
We may absorb the final factor in \eqref{eq:partial_at_1_minus_eps} into the first Beta density under the integral. This yields
\begin{align}
    p_{\widetilde{\mathrm{amb}}}(1-\varepsilon) \propto (1 - \varepsilon) \, \varepsilon^{-1/2} \, I(\varepsilon),
    \label{eq:p_mod_amb_1_minus_eps}
\end{align}
where
\begin{align}
    I(\varepsilon) \equiv \int_0^1 f_{\alpha_{\mathrm{cs}}, \alpha+\beta - 1/2}(\omega(1 - \varepsilon)) \left[ f_{\alpha, \beta}(v(\varepsilon, \omega)) + f_{\alpha, \beta}(1 - v(\varepsilon, \omega)) \right] \, \mathrm{d}\omega.
    \label{eq:remaining_integral}
\end{align}
By assumption, all parameters $\alpha, \beta, \alpha_{\mathrm{cs}}$ are greater than or equal to $1$, so the Beta densities under the integral in \eqref{eq:remaining_integral} are bounded. In particular, this allows us to find an integrable majorant, so that we may interchange limit and integration via Lebesgue's dominated convergence theorem. Using the above asymptotics of $v(\varepsilon, \omega)$, together with the continuity of the Beta densities, we obtain
\begin{align}
    \lim_{\varepsilon \to 0} I(\varepsilon) = 2 f_{\alpha,\beta}(1/2) \underbrace{\int_0^1 f_{\alpha_{\mathrm{cs}}, \alpha+\beta - 1/2}(\omega) \, \mathrm{d}\omega}_{=1} = \mathrm{const.} > 0.
\end{align}
From \eqref{eq:p_mod_amb_1_minus_eps} we conclude that
\begin{align}
    p_{\widetilde{\mathrm{amb}}}(1-\varepsilon) \sim K \varepsilon^{-1/2}, \quad K > 0,
\end{align}
as $\varepsilon \to 0$. In particular, this shows that $p_{\widetilde{\mathrm{amb}}}(a) \to \infty$ as $a \to 1^-$, as claimed.
\end{proof}

\end{document}